\documentclass[envcountsame]{llncs}

\usepackage{latexsym,amssymb,amsmath,amssymb,thmtools,mathtools}
\usepackage{xspace}
\usepackage{graphicx}
\usepackage{misc}
\usepackage{mathabx}
\usepackage{color} 
\usepackage{mathrsfs}
\usepackage{tikz}
\usetikzlibrary{graphs,matrix}
\usepackage[obeyDraft]{todonotes}
\usepackage{standalone}
\usepackage{stackrel}
\usepackage{relsize}


%


\newcommand{\nb}[1]{\todo[color=red]{#1}\xspace}

\newcommand{\greif}[1]{\ensuremath{\bigocirc#1}}
\newcommand{\lreif}[1]{\ensuremath{\bigodot#1}}

\newcommand\restr[2]{{
  \left.\kern-\nulldelimiterspace 
  #1 
  \vphantom{\big|} 
  \right|_{#2} 
  }}
  
\newcommand{\Int}[1]{#1^{\Imc}\xspace}
\renewcommand\dom{\ensuremath{\Delta}\xspace}
\newcommand{\KB}{\ensuremath{\mathcal{KB}}\xspace}
\newcommand{\per}{\mathpunct{\mbox{\bf .}}}
\newcommand{\pth}[2]{\ensuremath{\textsc{path}_{\mathscr{T}}(#1,#2)}\xspace}
\newcommand{\chd}[2]{\ensuremath{\textsc{child}_{\mathscr{T}}(#1,#2)}\xspace}
\newcommand{\A}{\ensuremath{\mathcal{A}}\xspace}
\newcommand{\Ob}{\ensuremath{\mathcal{O}}\xspace}
\newcommand{\Texa}{\ensuremath{\Tmc_{\sf exa}}\xspace}

\newcommand{\pushright}[1]{\ifmeasuring@#1\else\omit\hfill$\displaystyle#1$\fi\ignorespaces}
\newcommand{\pushleft}[1]{\ifmeasuring@#1\else\omit$\displaystyle#1$\hfill\fi\ignorespaces}
\newcommand{\specialcell}[1]{\ifmeasuring@#1\else\omit$\displaystyle#1$\ignorespaces\fi}

\newcommand{\DL}{\textsl{DL-Lite}}

\thickmuskip=3mu plus 2mu minus 2mu

\allowdisplaybreaks


\title{A Decidable Very Expressive \\ Description Logic for Databases\\ (Extended Version)}

\author{Alessandro Artale, Enrico Franconi, Rafael Pe\~naloza, Francesco Sportelli}
\institute{KRDB Research Centre, 
Free University of Bozen-Bolzano, Italy\\
\texttt{\{artale,franconi,penaloza,sportelli\}@inf.unibz.it}
}

\begin{document}

\date{}
\maketitle

\begin{abstract}
 We introduce \DLRp{}\negmedspace, an extension of the $n$-ary propositionally closed description logic \DLR to deal with attribute-labelled tuples (generalising the positional notation), projections of relations, and global and local objectification of relations, able to express inclusion, functional, key, and external uniqueness dependencies. The logic is equipped with both TBox and ABox axioms. We show how a simple syntactic restriction on the appearance of projections sharing common attributes in a \DLRp knowledge base makes reasoning in the language decidable with the same computational complexity as \DLR. The obtained \DLRpm $n$-ary description logic is able to encode more thoroughly conceptual data models such as EER, UML, and ORM.
\end{abstract}

\section{Introduction}

We introduce the description logic (DL) \DLRp extending the $n$-ary DL
\DLR \cite{Calvanese:et:al:TOCL-2008}, in order to capture database oriented constraints.
%
%
While \DLR is a rather expressive logic, it lacks a number of expressive means that can be added without increasing the complexity of reasoning---when used in a carefully controlled way.
The added expressivity is motivated by the increasing use of DLs as an abstract conceptual layer (an \textit{ontology}) over relational databases.


A \DLR knowledge base can express axioms with (i) propositional
combinations of concepts and (compatible) $n$-ary relations, (ii)
concepts as unary projections of $n$-ary relations, and (iii)
relations with a selected typed component. For
example, if ${\tt Pilot}$ and ${\tt RacingCar}$ are concepts and
${\tt DrivesCar}$, ${\tt DrivesMotorbike}$, ${\tt DrivesVehicle}$ are
binary relations, the knowledge base:
%
\begin{align*}
{\tt Pilot} &\sqsubseteq \exists[1] \selects{2}{\tt RacingCar}{\tt DrivesCar}\\
{\tt DrivesCar}\sqcup {\tt DrivesMotorbike} &\sqsubseteq {\tt DrivesVehicle}
\end{align*}
asserts that a pilot drives a racing car and that driving a car or a motorbike implies driving a vehicle.

The language we propose here, \DLRp, extends \DLR in the following ways.

\begin{itemize}
\item While \DLR instances of $n$-ary relations are $n$-tuples of objects---whose components are identified by their position in the tuple---instances of relations in \DLRp are \emph{attribute-labelled tuples} of objects, i.e., tuples where each component is identified by an attribute and not by its position in the tuple (see, e.g.,~\cite{Kanellakis:1990:chapter}). 
For example, the relation ${\tt Employee}$ may have the signature: 
$${\tt Employee}({\tt firstname}, {\tt lastname}, {\tt dept}, {\tt deptAddr}),$$ 
and an instance of ${\tt Employee}$ could be the tuple:
$$\langle{\tt firstname:John, lastname:Doe, dept: Purchase, deptAddr: London}\rangle.$$
\item Attributes can be \emph{renamed}, for example to recover the positional attributes:
$${\tt firstname, lastname, dept, deptAddr}~\rightleftarrows ~1,2,3,4.$$
\item \emph{Relation projections} allow to form new relations by projecting
  a given relation on some of its attributes. For example, if ${\tt Person}$ is a relation with signature ${\tt Person}({\tt name},{\tt surname})$, it could be related to ${\tt Employee}$ as follows::
\begin{align*}
\pi[{\tt firstname, lastname}]{\tt Employee}\sqsubseteq {\tt Person},\\
{\tt firstname, lastname}~\rightleftarrows ~\tt{name,surname}.
 \end{align*}
\item The \emph{objectification} of a relation (also known as \emph{reification}) is a concept whose instances are unique identifiers of the tuples instantiating the relation. Those identifiers could be unique only within an objectified relation (\emph{local objectification}), or they could be uniquely identifying tuples independently on the relation they are instance of (\emph{global objectification}). For example, the concept ${\tt EmployeeC}$ could be the \textit{global} objectification of the relation ${\tt Employee}$, assuming that there is a global 1-to-1 correspondence between pairs of values of the attributes ${\tt firstname,lastname}$ and ${\tt EmployeeC}$ instances:
$${\tt EmployeeC}\equiv\greif{\exists[{\tt firstname,lastname}] {\tt Employee}}.$$

Consider the relations with the following signatures:
$${\tt DrivesCar(name, surname, car)}, \quad {\tt OwnsCar (name, surname, car)},$$
and assume that anybody driving a car also owns it: ${\tt DrivesCar \sqsubseteq OwnsCar}$. The \textit{locally} objectified events of driving and owning, defined as 
$${\tt CarDrivingEvent \equiv \lreif DrivesCar},\quad {\tt CarOwningEvent \equiv \lreif OwnsCar},$$
do not imply that a driving event by a person of a car is the owning event by the same person and the same car: ${\tt CarDrivingEvent \not\sqsubseteq CarOwningEvent}$. Indeed, they are even disjoint: ${\tt CarDrivingEvent \sqcap CarOwningEvent \sqsubseteq \bot}$.

\end{itemize}


It turns out that \DLRp is an expressive description logic able to assert relevant constraints typical of relational databases. In Section~\ref{sec:expressivity} we will consider \emph{inclusion dependencies}, \emph{functional and key dependencies}, \emph{external uniqueness} and \emph{identification} axioms.
For example, \DLRp can express the fact that the attributes ${\tt firstname,lastname}$ play the role of a multi-attribute key for the relation ${\tt Employee}$: 
$$\pi[{\tt firstname,lastname}] {\tt Employee}\sqsubseteq
\pi^{\leq 1}[{\tt firstname,lastname}]{\tt Employee},$$
and that the attribute ${\tt deptAddr}$ functionally depends on the attribute ${\tt dept}$ within the relation ${\tt Employee}$:
$$
\exists[{\tt dept}] {\tt Employee}\sqsubseteq
\exists^{\leq 1}[{\tt dept}]\left(\pi[{\tt dept,deptAddr}]{\tt Employee}\right).
$$

While \DLRp turns out to be undecidable, we show how a simple syntactic condition on the appearance of projections sharing common attributes in a knowledge base makes the language decidable. The result of this restriction is a new language called \DLRpm{}\negmedspace. We prove that \DLRpm{}\negmedspace, while preserving most of the \DLRp expressivity, has a reasoning problem whose complexity does not increase w.r.t. the computational complexity of the basic \DLR language. We also present in Section~\ref{sec:api} the implementation of an API for the reasoning services in \DLRpm.

\begin{figure}[t]
	\centering
		\renewcommand{\arraystretch}{1.2} 
		$
		\begin{array}{r@{\hspace{2ex}}c@{\hspace{2ex}}l} 
			C & \to & C\!N\ \mid\ \neg C\
                                  \mid\ C_{1}\sqcap C_{2}\ \mid\
                                  \exists^{\geq q}[U_i] R\ \mid\ \greif{R}\ \mid\ \lreif{R\!N}\\
			R & \to & R\!N\ \mid\ R_1\setminus R_2\ \mid\
                                  R_{1}\sqcap R_{2}\mid\ R_{1}\sqcup
                                  R_{2}\mid\ \selects{U_i}{C}{R}\
                                  \mid\ \pi^{\lesseqgtr q}[U_1,\ldots,U_k] R\\
			\varphi & \to & C_1\sqsubseteq C_2\ \mid\ R_1\sqsubseteq R_2 \mid C\!N(o) \mid R\!N(U_1\!:\!o_1,\ldots,U_n\!:\!o_n) \mid o_1 = o_2 \mid o_1 \neq o_2 \\
			\vartheta & \to & U_1 \rightleftarrows U_2
		\end{array}
		$ 
		\renewcommand{\arraystretch}{1} 
	\caption{\label{fig:dlrp} The syntax of \DLRp.} 
\end{figure}
\section{The Description Logic \DLRp}
\label{sec:syntax}

We start by introducing the syntax of \DLRp. A \DLRp \emph{signature}
is a tuple
$\mathcal{L}=(\mathcal{C},\mathcal{R},\mathcal{O},\mathcal{U},\tau)$
where $\mathcal{C}$, $\mathcal{R}$, $\mathcal{O}$ and $\mathcal{U}$
are finite, mutually disjoint sets of \emph{concept names},
\emph{relation names}, \emph{individual names}, and \emph{attributes},
respectively, and $\tau$ is a \emph{relation signature} function,
associating a set of attributes to each relation name
$\tau(R\!N)=\{U_1,\ldots,U_n\}\subseteq \Umc$, with $n\geq 2$.

The syntax of concepts $C$, relations $R$, formulas $\varphi$, and
attribute renaming axioms $\vartheta$ is given in
Figure~\ref{fig:dlrp}, where $C\!N\in\mathcal{C}$,
$R\!N\in\mathcal{R}$, $U\in\mathcal{U}$, $o\in \Ob$, $q$ is a positive
integer and $2\leq k < \textsc{arity}(R)$.  The \emph{arity} of a
relation $R$ is the number of the attributes in its signature; i.e.,
$\textsc{arity}(R)=\left|\tau(R)\right|$, with the relation signature function $\tau$ extended to complex relations as in
Figure~\ref{fig:syn:tau}. Note that it is possible that the same
attribute appears in the signature of different relations. 

\begin{figure}[t] 
	\centering
		\renewcommand{\arraystretch}{1.2} 
		${ 
		\begin{array}{r@{\hspace{1ex}}l@{\hspace{3ex}}l@{\hspace{.3ex}}} 
			\tau(R_1\setminus R_2) = & \tau(R_1) & \\
			\tau(R_{1}\sqcap R_{2}) = & \tau(R_1) &
                                                                \text{if } \tau(R_1) = \tau(R_2)\\
			\tau(R_{1}\sqcup R_{2}) = & \tau(R_1) & \text{if } \tau(R_1) = \tau(R_2)\\
			\tau(\selects{U_i}{C}{R}) = & \tau(R) &
                                                                \text{if } U_i\in \tau(R)\\
			\tau(\pi^{\lesseqgtr q}[U_1,\ldots,U_k] R) = &
                                                               \{U_1,\ldots,U_k\} & \text{if } \{U_1,\ldots,U_k\}\subset \tau(R)\\
			\text{undefined} & & \text{otherwise} 
		\end{array}
		}$ 
		\renewcommand{\arraystretch}{1}
	\caption{\label{fig:syn:tau} The signature of \DLRp relations.} 
\end{figure}

As mentioned in the introduction, the \DLRp constructors added to \DLR are the \emph{local} and \emph{global
  objectification} ($\lreif{R\!N}$ and $\greif{R}$, respectively);
\emph{relation projections} with the possibility to count the
projected tuples ($\pi^{\lesseqgtr q}[U_1,\ldots,U_k] R$), and \emph{renaming
  axioms} over attributes ($U_1 \rightleftarrows
U_2$). 
Note that local objectification
($\lreif{R}$) can be applied to relation names, while global objectification
($\greif{R\!N}$) can be applied to complex relations. We use the
standard abbreviations:
\begin{align*}
  &\bot = C\sqcap\neg C,~~\top = \neg\bot,~~
  C_1 \sqcup C_2 = \neg(\neg C_1 \sqcap \neg C_2),~~ \exists[U_i] R = \exists^{\geq 1}[U_i]R,\\
  &\exists^{\leq q}[U_i]R  = \neg(\exists^{\geq q + 1}[U_i] R),~~
  \pi[U_1,\ldots,U_k] R = \pi^{\geq 1}[U_1,\ldots,U_k]R.
\end{align*}

A \DLRp \emph{TBox} \Tmc is a finite set of \emph{concept inclusion}
axioms of the form $C_1\sqsubseteq C_2$ and \emph{relation inclusion}
axioms of the form $R_1\sqsubseteq R_2$. We use $X_1\equiv X_2$ as a
shortcut for the two axioms $X_1\sqsubseteq X_2$ and
$X_2\sqsubseteq X_1$. A \DLRp \emph{ABox} \Amc is a finite set of
\emph{concept instance} axioms of the form $C\!N(o)$, \emph{relation
  instance} axioms of the form $R\!N(U_1\!:\!o_1,\ldots,U_n\!:\!o_n)$,
and \emph{same/distinct individual} axioms of the form $o_1 = o_2$ and
$o_1 \neq o_2$, with $o_i\in\Ob$. Restricting ABox axioms to concept
and relation names only does not affect the expressivity of \DLRp due
to the availability of unrestricted TBox axioms. 
A \DLRp \emph{renaming schema} $\Re$ is a finite set of renaming axioms of the form $U_1 \rightleftarrows U_2$. We use the
shortcut $U_1\ldots U_n\rightleftarrows U'_1\ldots U'_n$ to group many
renaming axioms with the meaning that $U_i\rightleftarrows U'_i$ for
all $i=1,\ldots, n$.
A \DLRp knowledge base (KB)
$\mathcal{KB}=(\Tmc\!,\Amc,\Re)$ is composed by a TBox \Tmc, an ABox
\Amc, and a renaming schema $\Re$.

The renaming operator $\rightleftarrows$ is an equivalence relation over the attributes $\mathcal{U}$, $(\rightleftarrows,\mathcal{U})$. The partitioning of $\mathcal{U}$ into equivalence classes induced by a renaming schema is meant to represent the alternative ways to name attributes in the knowledge base. A unique \emph{canonical representative} for each equivalence class is chosen to replace all the attributes in the class throughout the knowledge base. From now on we assume that a knowledge base is consistently rewritten by substituting each attribute with its canonical representative. After this rewriting, the renaming schema does not play any role in the knowledge base. We allow only \emph{arity-preserving} renaming schemas, i.e., there is no equivalence class containing two attributes from the same relation signature.


%

As shown in the introduction, the renaming schema is useful to reconcile the named attribute perspective and the positional perspective on relations.
It is also important to enforce \emph{union compatibility} among relations involved in relation inclusion axioms, and among relations involved in $\sqcap$- and $\sqcup$-set expressions.
%
%
Two relations are  \emph{union compatible} (w.r.t. a
  renaming schema) if they have the same
signature (up to the attribute renaming induced by the
renaming schema).  Indeed, as it will be clear from the semantics, a relation inclusion axiom involving non union compatible relations would always be false, and a $\sqcap$- and $\sqcup$-set expression involving non union compatible relations would always be empty.

%
%


\vspace{2ex}

\begin{figure}[t]
  \centering
        \renewcommand*{\arraystretch}{1.2}
		$ 
		\begin{array}{@{}r@{\hspace{0.5ex}}l@{}} 
			%
			%
			\Int{(\neg C)} = & \Int{\top} \setminus \Int{C}\\
			\Int{(C_{1}\sqcap C_{2})} = & \Int C_{1} \cap \Int C_{2}\\
			%
			%
			\Int{(\exists^{\geq q}[U_i] R)} = & \{d\in\dom\mid~\left|\{t\in\Int R\mid t[U_i]=d\}\right| \geq q \}\\
			\Int{(\greif{R})} = & \{d\in \dom \mid d=\imath(t) \land t\in \Int{R}\}\\
			\Int{(\lreif{R\!N})} = & \{d\in \dom \mid d=\ell_{R\!N}(t)\land t\in \Int{R\!N}\}\\
			\Int{(R_1\setminus R_2)} = & \Int R_{1} \setminus \Int R_{2}\\
			\Int{(R_{1}\sqcap R_{2})} = & \Int R_{1} \cap \Int R_{2}\\
			\Int{(R_{1}\sqcup R_{2})} = & \{t\in\Int R_{1}\cup\Int R_{2}\mid \tau(R_1)= \tau(R_2)\}\\
			\Int{(\selects {U_i}{C}{R})} = & \{
			t\in\Int{R} \mid t[U_i]\in\Int{C}\} \\
			\Int{(\pi^{\lesseqgtr q}[U_1,\ldots,U_k] R)} =
                                     &\{\langle U_1:d_1,\ldots,U_k:d_k\rangle\in T_{\dom}(\{U_1,\ldots,U_k\})\mid%
			\hspace{-1.5ex}%
			\\%
			& ~1\leq\left|\{t\in\Int R\mid t[U_1]=d_1,\ldots,t[U_k]=d_k\}\right| \lesseqgtr q \}
		\end{array}
		$
        \renewcommand*{\arraystretch}{1}
	\caption{The semantics of \DLRp expressions.} 
	\label{fig:sem:dlrp}
\end{figure}

The semantics of \DLRp uses the notion of \emph{labelled tuples} over a potentially infinite domain 
$\Delta$. Given a set of labels $\mathcal{X} \subseteq \mathcal{U}$ an \emph{$\mathcal{X}$-labelled tuple over $\Delta$} (or \emph{tuple} for short) is a \emph{total} function $t \colon \mathcal{X} \to \Delta$. For $U\in \mathcal{X}$, we write $t[U]$ to refer to the domain element ${d\in \Delta}$ labelled by $U$. 
Given $d_1,\dots,d_n\in \Delta$, the expression ${\langle U_1\colon d_1,\ldots,U_n\colon d_n\rangle}$ stands for the tuple $t$ defined on the set of labels $\{U_1,\ldots,U_n\}$ such that ${t[U_i]=d_i}$, for ${1\leq 1\leq n}$.
The \emph{projection} of the tuple $t$ over the attributes ${U_1,\ldots,U_k}$ is the function $t$ restricted to be undefined for the labels not in ${U_1,\ldots,U_k}$, and it is denoted by ${t[U_1,\ldots,U_k]}$. The relation signature function $\tau$ is extended to labelled tuples to obtain the set of labels on which a tuple is defined. $T_\Delta(\mathcal{X})$ denotes the set of all $\mathcal{X}$\mbox{-}labelled tuples over $\Delta$, for $\mathcal{X} \subseteq \mathcal{U}$, and we overload this notation by denoting with $T_\Delta(\mathcal{U})$ the set of all possible tuples with labels within the whole set of attributes $\Umc$.

A \DLRp \emph{interpretation} is a tuple
$\Imc = (\dom, \cdot^\Imc, \imath, L)$ consisting of a nonempty
\emph{domain} $\dom$, an \emph{interpretation function} $\cdot^\Imc$, 
a \emph{global objectification function} $\imath$, and a family $L$
containing one \emph{local objectification function} $\ell_{R\!N_i}$
for each named relation $R\!N_i\in\mathcal{R}$.
%
%
The global objectification function is an injective function,
${\imath:T_{\dom}(\Umc) \to \dom}$, associating a \emph{unique} global
identifier to each tuple.
%
The local objectification functions,
${\ell_{R\!N_i}:T_{\dom}(\Umc) \to \dom}$, are associated to each
relation name in the signature, and as the global objectification
function they are injective: they associate an identifier---which is
guaranteed to be unique only within the interpretation of a relation
name---to each tuple.

The interpretation function $\cdot^\Imc$ assigns a domain element to
each individual, $o^{\mathcal{I}}\in\dom$, a set of domain elements to
each concept name, $C\!N^{\mathcal{I}}\subseteq \dom$, and a set of
$\tau(R\!N)$-labelled tuples over $\dom$ to each relation name $R\!N$, $R\!N^{\mathcal{I}}\subseteq T_{\dom}(\tau(R\!N))$. 
Note that the unique name assumption is not enforced.
The interpretation function $\cdot^\Imc$ is unambiguously extended
over concept and relation expressions as specified in
Figure~\ref{fig:sem:dlrp}. Notice that the construct
$\pi^{\lesseqgtr q}[U_1,\ldots,U_k] R$ is interpreted as a classical
projection over a relation, thus including only tuples belonging to
the relation.

The interpretation $\Imc$ satisfies the concept inclusion axiom
$C_1\sqsubseteq C_2$ if $\Int C_1\subseteq \Int C_2$, and the relation
inclusion axiom $R_1\sqsubseteq R_2$ if $\Int R_1\subseteq \Int
R_2$. It satisfies the concept instance axiom $C\!N(o)$ if
$\Int o\in \Int {C\!N}$\!, the relation instance axiom
$R\!N(U_1\!:\!o_1,\ldots,U_n\!:\!o_n)$ if
${\langle U_1\colon\Int{o_1},\ldots,U_n\colon\Int{o_n}\rangle}\in
\Int {R\!N}$, and the axioms $o_1 = o_2$ and $o_1 \neq o_2$ if
$\Int{o_1} = \Int{o_2}$, and $\Int{o_1} \neq \Int{o_2}$,
respectively. 
\Imc is a \emph{model} of the knowledge base $(\mathcal{T}\!,\mathcal{A},\Re)$ if
it satisfies all the axioms in the TBox $\mathcal{T}$ and in the ABox
$\mathcal{A}$, once the knowledge base has been rewritten according to the
renaming schema.


\begin{example}\label{exa:basic}
  Consider the relation names $R_1, R_2$ with
  $\tau(R_1) = \{W_1,W_2,W_3,W_4\}$,
  $\tau(R_2) = \{V_1,V_2,V_3,V_4,V_5\}$, and a knowledge base with the renaming axiom
  $W_1W_2W_3 \rightleftarrows V_3V_4V_5$ and a TBox \Texa:
 \begin{align}
   \pi[W_1,W_2] R_1 \sqsubseteq {} & \pi^{\leq 1}[W_1,W_2] R_1 \label{key} \\
   \pi[V_3,V_4] R_2 \sqsubseteq {} & \pi^{\leq 1}[V_3,V_4](\pi[V_3,V_4,V_5] R_2) \label{funct-dep} \\
   \pi[W_1,W_2,W_3] R_1 \sqsubseteq {} & \pi[V_3,V_4,V_5] R_2 \label{subrelation}.
 \end{align}
The axiom~\eqref{key}
 expresses that $W_1,W_2$ form a multi-attribute key for $R_1$;
 \eqref{funct-dep} introduces a functional dependency in the relation
 $R_2$ where the attribute $V_5$ is functionally dependent from
 attributes $V_3, V_4$, and~\eqref{subrelation} states an inclusion
 between two projections of the relation names $R_1, R_2$ based on
 the renaming schema axiom.\qed
\end{example}

\emph{KB satisfiability} refers to the problem of deciding the existence of a model of a given knowledge base; \emph{concept satisfiability} (resp. \emph{relation satisfiability}) is the problem of deciding whether there is a model of the knowledge base with a non-empty interpretation of a given concept (resp. relation). A knowledge base \emph{entails} (or \emph{logically implies}) an axiom if all models of the knowledge base are also models of the axiom.
For instance, it is easy to see that the TBox in Example~\ref{exa:basic} entails that $V_3,V_4$ are a key for $R_2$:
$$
 \Texa\models \pi[V_3,V_4] R_2 \sqsubseteq \pi^{\leq 1}[V_3,V_4] R_2 ~,
$$

\noindent and that axiom~\eqref{funct-dep} is redundant in \Texa.
The decision problems in \DLRp can be all reduced to KB satisfiability.
\begin{lemma}
In \DLRp, concept and relation satisfiability and entailment are reducible to KB satisfiability.
\end{lemma}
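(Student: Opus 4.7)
The plan is to reduce each of concept satisfiability, relation satisfiability, and axiom entailment to KB satisfiability by augmenting $\mathcal{KB}=(\mathcal{T},\mathcal{A},\Re)$ with auxiliary axioms built from fresh symbols, so that their interpretations can be chosen freely without interfering with the interpretation of the symbols already in $\mathcal{KB}$.

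For concept satisfiability of a concept $C$, I pick a fresh concept name $A$ and a fresh individual $o$ and form $\mathcal{KB}'=(\mathcal{T}\cup\{A\sqsubseteq C\},\,\mathcal{A}\cup\{A(o)\},\,\Re)$. If $\mathcal{I}$ is a model of $\mathcal{KB}$ with some $d\in C^{\mathcal{I}}$, then extending $\mathcal{I}$ by $A^{\mathcal{I}}:=\{d\}$ and $o^{\mathcal{I}}:=d$ gives a model of $\mathcal{KB}'$; conversely every model of $\mathcal{KB}'$ has $o^{\mathcal{I}}\in A^{\mathcal{I}}\subseteq C^{\mathcal{I}}$, so $C$ is satisfiable. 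Relation satisfiability of $R$ with $\tau(R)=\{U_1,\ldots,U_n\}$ is handled in the same way by introducing a fresh relation name $S$ with $\tau(S):=\tau(R)$ (automatically union-compatible with $R$) together with fresh individuals $o_1,\ldots,o_n$, adding $S\sqsubseteq R$ to the TBox and $S(U_1\!:\!o_1,\ldots,U_n\!:\!o_n)$ to the ABox.

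For entailment, the generic strategy is to show $\mathcal{KB}\models\alpha$ iff $\mathcal{KB}$ augmented with the ``negation'' of $\alpha$ is unsatisfiable. For inclusion axioms this is immediate from the satisfiability reductions above: $\mathcal{KB}\models C_1\sqsubseteq C_2$ iff $C_1\sqcap\neg C_2$ is unsatisfiable, and, assuming $\tau(R_1)=\tau(R_2)$ after renaming (otherwise the entailment is equivalent to unsatisfiability of $R_1$), $\mathcal{KB}\models R_1\sqsubseteq R_2$ iff $R_1\setminus R_2$ is unsatisfiable. For the (in)equality assertions one simply swaps the assertion in $\mathcal{A}$: $\mathcal{KB}\models o_1=o_2$ iff $\mathcal{KB}$ extended with $o_1\neq o_2$ is unsatisfiable, and dually for $\neq$.

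The main subtlety will be entailment of an ABox membership axiom, because the ABox syntax admits neither concept nor relation negation directly. For $\mathcal{KB}\models C\!N(o)$ I add a fresh concept name $A$ together with $A\sqsubseteq\neg C\!N$ and $A(o)$: the models of the extension are exactly the models of $\mathcal{KB}$ in which $o\notin C\!N^{\mathcal{I}}$, so unsatisfiability is equivalent to entailment. For $\mathcal{KB}\models R\!N(U_1\!:\!o_1,\ldots,U_n\!:\!o_n)$ I introduce a fresh relation $S$ with $\tau(S):=\tau(R\!N)$, add the ABox fact $S(U_1\!:\!o_1,\ldots,U_n\!:\!o_n)$, and add the TBox axiom $S\sqsubseteq S\setminus R\!N$, which is equivalent to forcing $S^{\mathcal{I}}\cap R\!N^{\mathcal{I}}=\emptyset$ and therefore excludes the witness tuple from $R\!N^{\mathcal{I}}$. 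All these reductions are polynomial and use only constructors already present in \DLRp, which completes the plan.
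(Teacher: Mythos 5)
Your proof is correct; the paper states this lemma without any proof, and your reductions (a fresh concept/relation name plus an ABox witness for satisfiability, and augmenting the KB with the ``negation'' of the candidate axiom for entailment via $\sqcap\neg$, $\setminus$, and swapped (in)equality assertions) are exactly the standard ones intended here. In particular, you correctly work around the restriction of ABox assertions to concept and relation \emph{names} by routing the witness through a fresh name ($A\sqsubseteq C$ with $A(o)$, resp.\ $S\sqsubseteq R$ with $S(U_1\!:\!o_1,\ldots,U_n\!:\!o_n)$), and you correctly note that non-union-compatible relation inclusions degenerate to unsatisfiability of the left-hand side.
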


\section{Expressiveness of \DLRp}
\label{sec:expressivity}

\DLRp is an expressive description logic able to assert relevant constraints in the context of relational databases, such as \emph{inclusion dependencies} (namely inclusion axioms among arbitrary
projections of relations), \emph{equijoins}, \emph{functional dependency} axioms, \emph{key} and \emph{foreign key} axioms, \emph{external uniqueness} axioms,  \emph{identification} axioms, and \emph{path functional dependencies}.

An \emph{equijoin} among two relations with disjoint signatures is the set of all combinations of tuples in the relations that are equal on their selected attribute names. Let $R_1,R_2$ be relations with signatures
$\tau(R_1)=\{U,U_1,\ldots,U_{n_1}\}$ and $\tau(R_2)=\{V,V_1,\ldots,V_{n_2}\}$; their equijoin over $U$ 
and $V$ is the relation  $R = R_1\!\stackrel[U=V]{}{\mathlarger{\mathlarger{\mathlarger{\bowtie}}}}\!R_2$ with 
signature $\tau(R)=\tau(R_1)\cup\tau(R_2)\setminus\{V\}$, 
which is expressed by the \DLRp axioms:
\renewcommand*{\arraystretch}{1.3}
$$\begin{array}{l}
\pi[U,U_1,\ldots,U_{n_1}]R  \equiv {} \selects{U}{(\exists[U]R_1\sqcap \exists[V]R_2)}{R_1} \\
\pi[V,V_1,\ldots,V_{n_2}]R \equiv {} \selects{V}{(\exists[U]R_1\sqcap \exists[V]R_2)}{R_2}
\\  U\rightleftarrows V~.
\end{array}
\renewcommand*{\arraystretch}{1}
$$
A \emph{functional dependency} axiom $(R:U_1\ldots U_j \rightarrow U)$ 
(also called \emph{internal uniqueness} axiom~\cite{halpin2008}) states that the values of the attributes $U_1\ldots U_j$ uniquely determine the value of the attribute $U$ in the relation $R$. 
Formally, the interpretation $\Imc$ satisfies this functional dependency axiom 
if, for all tuples $s,t\in R^\Imc$, $s[U_1] = t[U_1], \ldots, s[U_j] = t[U_j]$ imply $s[U] = t[U]$. 
Functional dependencies can be expressed in 
\DLRp{}\negmedspace, assuming that $\{U_1,\ldots, U_j, U\}\subseteq \tau(R)$, with the axiom:
\begin{center}
  $\pi[U_1,\ldots, U_j]R\sqsubseteq\pi^{\leq 1}[U_1,\ldots,
  U_j](\pi[U_1,\ldots, U_j, U] R).$
\end{center}
A special case of a functional dependency are \emph{key} axioms $(R:U_1\ldots U_j \rightarrow R)$, which state
that the values of the key attributes $U_1\ldots U_j$ of a relation $R$ uniquely identify tuples in $R$. A key axiom can be expressed in \DLRp, assuming that $\{U_1\ldots U_j\}\subseteq\tau(R)$, with the axiom:
\begin{center}
  $\pi[U_1,\ldots, U_j]R\sqsubseteq\pi^{\leq 1}[U_1,\ldots,
  U_j]R.$
\end{center}

A \emph{foreign key}\nb{A: added} is the obvious result of an inclusion dependency
together with a key constraint involving the foreign key attributes.

The \emph{external uniqueness} axiom $([U^1]R_1\downarrow\ldots \downarrow [U^h]R_h)$ states that the join $R$ of the relations $R_1,\ldots,R_h$ via the attributes $U^1,\ldots,U^h$ has the joined attribute functionally dependent on all the others~\cite{halpin2008}. This can be expressed in \DLRp with the axioms:
%
\renewcommand*{\arraystretch}{1.3}
$$\begin{array}{l}
R\equiv {} R_1\!\stackrel[U^1=U^2]{}{\mathlarger{\mathlarger{\mathlarger{\bowtie}}}}\cdots\stackrel[U^{h-1}=U^{h}]{}{\mathlarger{\mathlarger{\mathlarger{\bowtie}}}}\!R_h\\
R: {}  U^1_{1},\ldots, U^1_{n_1},\ldots,U^h_{1},\ldots, U^h_{n_h} \rightarrow U^1
\end{array}
\renewcommand*{\arraystretch}{1}
$$
where $\tau(R_i)=\{U^i,U^i_1,\ldots, U^i_{n_i}\}, 1\le i\le h$, and $R$ is a new relation name with 
%
$\tau(R)=\{U^1,U^1_{1},\ldots, U^1_{n_1},\ldots,U^h_{1},\ldots, U^h_{n_h}\}$.

\emph{Identification} axioms as defined in \DLRID~\cite{CalvaneseGL01}
(an extension of \DLR with functional dependencies and identification
axioms) are a variant of external uniqueness axioms, constraining only
the elements of a concept $C$; they can be expressed in \DLRp with the
axiom:
\begin{center}
  $ [U^1]\selects{U_1}{C}{R_1}\downarrow\ldots \downarrow
  [U^h]\selects{U_h}{C}{R_h}.$
\end{center}


\emph{Path functional dependencies}---as defined in the DL family
$\mathcal{CFD}$~\cite{TomanW09}---can be
expressed in \DLRp as identification axioms involving joined sequences
of functional binary relations.  \DLRp also captures the \emph{tree-based identification
  constraints (tid)} introduced in~\cite{CFPSS-AAAI-2014-dlnf} to express functional dependencies in $\DL_{\textit{RDFS,tid}}$. 
  The rich set of constructors in \DLRp allows us to extend the known mappings in description logics of popular conceptual data models. The EER mapping as introduced in~\cite{ACKRZ:er07} can be extended to deal with multi-attribute keys (by using identification axioms) and named roles in relations; the ORM mapping as introduced in~\cite{DBLP:conf/otm/FranconiMS12,DBLP:conf/otm/SportelliF16} can be extended to deal with arbitrary subset and exclusive relation constructs (by using inclusions among global objectifications of projections of relations), arbitrary internal and external uniqueness constraints, arbitrary frequency constraints (by using projections), local objectification, named roles in relations, and fact type readings (by using renaming axioms); the UML mapping as introduced in~\cite{BeCD05-AIJ-2005} can be fixed to deal properly with association classes (by using local objectification) and named roles in associations.

\section{The \DLRpm fragment of \DLRp}
\label{sec:dlrpm}

Since a \DLRp knowledge base can express inclusions and functional dependencies, the entailment problem is undecidable~\cite{ChandraV85}. Thus, in this section we present \DLRpm{}\negmedspace, a decidable syntactic fragment of \DLRp limiting the coexistence of relation projections in a knowledge base.
 
Given a \DLRp knowledge base $\KB = (\mathcal{T}\!,\mathcal{A},\Re)$, we
define the \emph{projection signature of} \KB as the set $\mathscr{T}$
containing the signatures $\tau(R\!N)$ of all relations
$R\!N\in\mathcal{R}$, the singleton sets associated with each
attribute name $U\in\mathcal{U}$, and the relation signatures that
appear explicitly in projection constructs in some axiom from \Tmc,
together with their implicit occurrences due to the renaming
schema. Formally, $\mathscr{T}$ is the smallest set such that
%
	(i)~$\tau(R\!N)\in\mathscr{T}$ for all $R\!N\in\mathcal{R}$; 
	(ii) $\{U\}\in\mathscr{T}$ for all $U\in\mathcal{U}$; and
	(iii) $\{U_1,\ldots,U_k\}\in\mathscr{T}$ for all $\pi^{\lesseqgtr q}[V_1,\ldots,V_k] R$ appearing as sub-formulas in $\Tmc$ and 
		$V_i\in [U_i]_\Re $ \text{ for}~$1\!\leq\!i\!\leq\!k$. 

The \emph{projection signature graph} of \KB is the directed acyclic graph corresponding to the Hasse diagram of $\mathscr{T}$ ordered by the proper subset relation $\supset$, whose sinks are the attribute singletons $\{U\}$. We call this graph $(\supset,\mathscr{T})$.
%
Given a set of attributes
$\tau=\{U_1,\ldots,U_k\}\subseteq\mathcal{U}$, the \emph{projection
  signature graph dominated by $\tau$}, denoted as $\mathscr{T}_\tau$,
is the sub-graph of $(\supset,\mathscr{T})$ with $\tau$ as root and
containing all the nodes reachable from $\tau$.
%
%
Given two sets of attributes $\tau_1,\tau_2\subseteq\mathcal{U}$,
$\pth{\tau_1}{\tau_2}$ denotes the set of paths in
$(\supset,\mathscr{T})$ between $\tau_1$ and $\tau_2$. Note that,
$\pth{\tau_1}{\tau_2}=\emptyset$ both when a path does not exist and
when $\tau_1\subseteq \tau_2$.  The notation $\chd{\tau_1}{\tau_2}$
means that ${\tau_2}$ is a child (i.e., a direct descendant) of
${\tau_1}$ in $(\supset,\mathscr{T})$.
We now introduce \DLRpm as follows.
\begin{definition}\label{def:dlrpm}
	A \emph{\DLRpm knowledge base} is a \DLRp knowledge base that satisfies the following syntactic conditions: 
	\begin{enumerate}
        \item the projection signature graph $(\supset,\mathscr{T})$
          is a multitree: i.e., for every node $\tau\in\mathscr{T}$,
          the graph $\mathscr{T}_\tau$ is a tree; and
        \item for every projection construct
          $\pi^{\lesseqgtr q}[U_1,\ldots,U_k] R$ and every concept
          expression of the form $\exists^{\geq q}[U] R$ appearing
          in \Tmc, if $q>1$ then the length of the path
          $\pth{\tau(R)}{\{U_1,\ldots,U_k\}}$ is 1.
	\end{enumerate}
\end{definition}
The first condition in \DLRpm restrict \DLRp in the way that multiple
projections of relations may appear in a knowledge base: intuitively, there cannot
be different projections sharing a common attribute.  Moreover,
observe that in \DLRpm $\textsc{path}_{\mathscr{T}}$ is necessarily
functional, due to the multitree restriction. By relaxing the first condition the language becomes undecidable, as we mentioned at the beginning of this Section. The second condition is also necessary to prove decidability of \DLRpm (see the proof in the next Section); however, we do not know whether this condition could be relaxed while preserving decidability.

Figure~\ref{fig:multitree} shows that the projection signature graph of the knowledge base from Example~\ref{exa:basic} is indeed a multitree.
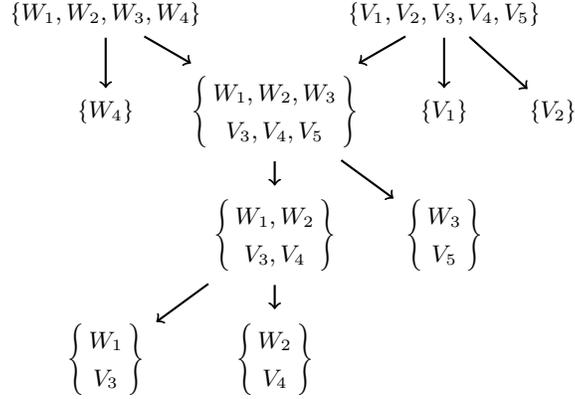
\begin{figure}[t]
\centering


\begin{tikzpicture}[thick,shorten >=1pt, shorten <=1pt,->]
  \matrix (m) [matrix of math nodes, row sep=1.3em, column sep=-1.1em]{
	 \{W_1,W_2,W_3,W_4\} & & \{V_1,V_2,V_3,V_4,V_5\} & \\
	\{W_4\} & \left\{ \begin{array}{c} W_1,W_2,W_3 \\ V_3,V_4,V_5 \end{array} \right\} & \{V_1\} & \{V_2\} \\
	& \left\{\begin{array}{c} W_1,W_2 \\ V_3,V_4 \end{array}\right\} & \left\{ \begin{array}{c} W_3 \\ V_5 \end{array} \right\}\\
	\left\{ \begin{array}{c} W_1 \\ V_3 \end{array} \right\}  & \left\{ \begin{array}{c} W_2 \\ V_4 \end{array} \right\} & \\
  };
  \path 
  	(m-1-1) edge (m-2-1)
  	(m-1-1) edge (m-2-2)
  	(m-1-3) edge (m-2-2)
  	(m-1-3) edge (m-2-3)
  	(m-1-3) edge (m-2-4)
  	(m-2-2) edge (m-3-2)
  	(m-2-2) edge (m-3-3)
  	(m-3-2) edge (m-4-1)
  	(m-3-2) edge (m-4-2)
  ;
\end{tikzpicture}
\caption{\label{fig:multitree} The projection signature graph of Example~\ref{exa:basic}.}
\end{figure}
Note that in the figure we have collapsed equivalent attributes in a unique equivalence class, according to the renaming schema. Furthermore, since all its projection constructs have $q=1$, this knowledge base belongs to \DLRpm.
%

\DLR is included in \DLRpm, since the projection signature graph of any \DLR knowledge base is always a degenerate multitree with maximum depth equal to 1. 
Not all the database constraints as introduced in
Section~\ref{sec:expressivity} can be directly expressed in
\DLRpm. While functional dependency and key axioms can be expressed
directly in \DLRpm, equijoins, external uniqueness axioms, and
identification axioms introduce projections of a relation which share
common attributes, thus violating the multitree restriction. 
For example,\nb{A: Added} the axioms for capturing an equijoin between two
relations, $R_1, R_2$ would generate a projection signature graph with
the signatures of $R_1, R_2$ as projections of the signature of the
join relation $R$ sharing the attribute on which the join is performed,
thus violating condition $1$.

However,
in \DLRpm it is still possible to reason over both external uniqueness
and identification axioms by encoding them into a set of saturated
ABoxes (as originally proposed in~\cite{CalvaneseGL01}) and check
whether there is a saturation that satisfies the
constraints. Therefore, we can conclude that \DLRID extended with
unary functional dependencies is included in \DLRpm{}\negmedspace,
provided that projections of relations in the knowledge base form a
multitree projection signature graph. Since (unary) functional
dependencies are expressed via the inclusions of projections of
relations, by constraining the projection signature graph to be a
multitree, the possibility to build combinations of functional
dependencies as the ones in~\cite{CalvaneseGL01} leading to
undecidability is ruled out. 

Note that the \emph{non-conflicting keys} sufficient condition guaranteeing the decidability of inclusion dependencies and keys of \cite{j.websem289} is in conflict with our more restrictive requirement: indeed \cite{j.websem289} allow for overlapping projections, but the considered datalog language is not comparable to \DLRp.

Concerning the ability of \DLRpm to capture conceptual data models, only the mapping of ORM schemas is affected by the \DLRpm restrictions: \DLRpm is able to correctly express an ORM schema if the projections involved in the schema satisfy the \DLRpm multitree restriction.


\section{Mapping \DLRpm to \ALCQI}
\label{sec:mapping}

This section shows constructively the main technical result
of this paper, i.e., that reasoning in \DLRpm is an \ExpTime-complete
problem. The lower bound is clear by observing that \DLR is a
sublanguage of \DLRpm. More challenging is the upper bound obtained by
providing a mapping from \DLRpm KBs to \ALCQI KBs---a Boolean complete
DL with qualified number restrictions of the form $\exists^{\geq
  q}R\per C$, and inverse roles of the form $R^-$
(see~\cite{BCMNP03} for more details).
%
%
We adapt and extend the mapping presented for \DLR
in~\cite{Calvanese:et:al:TOCL-2008}, with the modifications proposed
by~\cite{HorrocksSTT00} to deal with ABoxes without the unique name
assumption.

We recall that the renaming schema, $\Re$, does not play any role
since we assumed that a \DLRpm KB is rewritten by choosing a single
canonical representative, $[U]_\Re$, for each $V\in [U]_\Re$. Thus, we
consider \DLRpm KBs as pairs of TBox and ABox axioms.

\begin{figure}[t] 
\centering
$ 
\begin{array}{r@{\hspace{1ex}}c@{\hspace{1ex}}l} 
(\neg C)^\dag &=& \neg C^\dag \\
(C_1 \sqcap C_2)^\dag & =& C_1^\dag \sqcap C_2^\dag \\
%
%
(\exists^{\geq q}[U_i] R)^\dag & =& \left\{ 
                                  \begin{array}[2]{ll}
                                          \exists^{\geq q}{\left(\pth{\tau(R)}{\{U_i\}}^\dag\right)^-}\per{R^\dag},
                                           & \text{if}~\pth{\tau(R)}{\{U_i\}}\neq\emptyset\\
                                          \bot, & \text{otherwise}
                                   \end{array}\right.
                                                            \\
(\greif{R})^\dag &=& R^\dag \\
(\lreif{R\!N})^\dag &=& A^l_{R\!N}
\vspace{2ex}\\
(R_1\setminus R_2)^\dag &=& R_1^\dag\sqcap \neg R_2^\dag\\
(R_1 \sqcap R_2)^\dag & =& R_1^\dag \sqcap R_2^\dag\\
(R_1 \sqcup R_2)^\dag & =& \left\{
                                \begin{array}[2]{ll}
                                   R_1^\dag \sqcup R_2^\dag, & \text{if}~\tau(R_1)=\tau(R_2)\\
                                  \bot, & \text{otherwise}
                                \end{array}\right.\\
(\selects{U_i}{C}{R})^\dag & =& \left\{
                                    \begin{array}[2]{ll} 
                                        R^\dag \sqcap  \forall \pth{\tau(R)}{\{U_i\}}^\dag\per C^\dag,
                                           & \text{if}~\pth{\tau(R)}{\{U_i\}}\neq\emptyset\\
                                       \bot, & \text{otherwise}
                                     \end{array}\right.\\
(\pi^{\lesseqgtr q}[U_1,\ldots,U_k]R)^\dag & =& \left\{
                                      \begin{array}[1]{l}
                                         \cardd{q}{\left(\pth{\tau(R)}{\{U_1,\ldots,U_k\}}^\dag\right)^-}{R^\dag},\\
                                          \qquad\qquad\qquad\qquad\text{if}~\pth{\tau(R)}{\{U_1,\ldots,U_k\}}\neq\emptyset\\
                                          \bot, \qquad\qquad\qquad\qquad\text{otherwise}
                                        \end{array}\right.
\end{array}
$ 
\caption{The mapping to \ALCQI{} for concept and relation expressions.} \label{fig:themapping}
\end{figure}

We first introduce a mapping function $\cdot^\dag$ from \DLRpm
concepts and relations to \ALCQI concepts.
The function $\cdot^\dag$ maps each concept name $C\!N$ and each
relation name $R\!N$ appearing in the \DLRpm KB to an \ALCQI concept
names $C\!N$ and $A_{R\!N}$, respectively. The latter is the global
reification of $R\!N$.
For each relation name $R\!N$, the \ALCQI signature also includes a
concept name $A_{R\!N}^{l}$ and a role name $Q_{R\!N}$ to capture
local objectification.  The mapping $\cdot^\dag$ is extended to
concept and relation expressions as illustrated in Figure~\ref{fig:themapping},
where\nb{A: added} the notation $\cardd{q}{R}{C}$ is a shortcut for the conjunction
$\exists R\per C\sqcap \exists^{\geq q}{R}\per{C}$.
%

The mapping crucially uses the projection signature graph to
map projections and selections, by accessing paths in the projection
signature graph $(\supset,\mathscr{T})$ associated to the \DLRpm
KB. If there is a path
$\pth{\tau}{\tau'} = \tau,\tau_1,\ldots,\tau_n, \tau'$ from ${\tau}$
to ${\tau'}$ in $\mathscr{T}$, then the \ALCQI signature contains role
names $Q_{\tau'}, Q_{\tau_i}$, for $i=1,\ldots,n$, and the following
role chain expression is generated by the mapping:
\[
	\pth{\tau}{\tau'}^\dag = Q_{\tau_1}\chain\ldots\chain Q_{\tau_n}\chain Q_{\tau'}, 
\]
In particular, the mapping uses the following notation: the inverse
role chain $(R_1\chain\ldots\chain R_n)^-$, for $R_i$ a role name,
stands for the chain $R_n^-\chain\ldots\chain R_1^-$, with $R_i^-$ an
inverse role, the expression
$\exists^{\lesseqgtr 1} R_1\chain\ldots\chain R_n\per C$ stands for
the \ALCQI concept expression
$\card{1}{R_1\per\ldots\per\exists^{\lesseqgtr 1} R_n}{C}$ and
$\forall R_1\chain\ldots\chain R_n\per C$ for the \ALCQI concept
expression $\forall R_1\per\ldots\per\forall R_n\per {C}$.  Thus,
since \DLRpm restricts to $q=1$ the cardinalities on any path of
length strictly greater than $1$ (see condition $2$ in
Def.~\ref{def:dlrpm}), the above notation shows that we remain within
the \ALCQI syntax when the mapping applies to cardinalities. If, e.g.,
we need to map the \DLRpm cardinality constraint $\EXISTR{q}{U_i}
R$ 
with $q>1$, then, to stay within the \ALCQI syntax,
$U_i$ 
must not be mentioned in any other projection in such a way that
$|\pth{\tau(R)}{\{U_i\}}|=1$. 
Finally, notice that the mapping introduces a concept name
$A_{R\!N}^{\tau_i}$ for each projected signature $\tau_i$ in the
projection signature graph dominated by $\tau(R\!N)$, i.e.,
$\tau_i\in\mathscr{T}_{\tau(R\!N)}$, to capture global reifications of
the various projections of $R\!N$ in the given KB. We also use the shortcut
$A_{R\!N}$ which stands for $A_{R\!N}^{\tau(R\!N)}$.

%
%

Intuitively, each node in the projection signature graph associated to
a \DLRpm KB denotes a relation projection and the mapping reifies each
of these projections. The target \ALCQI signature resulting from
mapping the \DLRpm KB of Example~\ref{exa:basic} is partially
presented in Fig.~\ref{fig:mapping}, together with the projection
signature graph (showed in Fig.~\ref{fig:multitree}).  Each node of
the graph is labelled with the corresponding global reification
concept ($A_{R_i}^{\tau_j}$), for each 
$R_i\in\mathcal{R}$ and each projected signature $\tau_j$ in the
projection signature graph dominated by $\tau(R_i)$, while the edges
are labelled by the roles ($Q_{\tau_i}$) needed for the reification.
\begin{figure}[t]
\centering
\resizebox{\columnwidth}{!}{


\begin{tikzpicture}[thick,shorten >=1pt, shorten <=1pt,->]
  \matrix (m) [matrix of math nodes, row sep=2.9em, column sep=0.3em]{
	 A_{R_1} & & A_{R_2} & \\
	A_{R_1}^{\{W_4\}} & A_{R_1}^{\{W_1,W_2,W_3\}},A_{R_2}^{\{W_1,W_2,W_3\}} & A_{R_2}^{\{V_1\}} & A_{R_2}^{\{V_2\}} \\
	& A_{R_1}^{\{W_1,W_2\}},A_{R_2}^{\{W_1,W_2\}} & A_{R_1}^{\{W_3\}},A_{R_2}^{\{W_3\}}\\
	A_{R_1}^{\{W_1\}},A_{R_2}^{\{W_1\}}  & A_{R_1}^{\{W_2\}},A_{R_2}^{\{W_2\}} \\
  };
  \path 
  	(m-1-1) edge node[left] {\scalebox{0.8}{$Q_{\{W_4\}}$}} (m-2-1)
  	(m-1-1) edge node[above right,sloped,anchor=south] {\scalebox{0.8}{$Q_{\{W_1,W_2,W_3\}}$}} (m-2-2)
  	(m-1-3) edge node[above right,sloped,anchor=south] {\scalebox{0.8}{$Q_{\{W_1,W_2,W_3\}}$}} (m-2-2)
  	(m-1-3) edge node[left] {\scalebox{0.8}{$Q_{\{V_1\}}$}} (m-2-3)
  	(m-1-3) edge node[right] {\scalebox{0.8}{$Q_{\{V_2\}}$}} (m-2-4)
  	(m-2-2) edge node[left] {\scalebox{0.8}{$Q_{\{W_1,W_2\}}$}} (m-3-2)
  	(m-2-2) edge node[right,sloped,anchor=south] {\scalebox{0.8}{$Q_{\{W_3\}}$}} (m-3-3)
  	(m-3-2) edge node[left,sloped,anchor=south] {\scalebox{0.8}{$Q_{\{W_1\}}$}} (m-4-1)
  	(m-3-2) edge node[right] {\scalebox{0.8}{$Q_{\{W_2\}}$}} (m-4-2)
  ;
\end{tikzpicture}
}
\vspace*{-4.5ex}
\caption{\label{fig:mapping} The \ALCQI signature generated by \Texa.}
\end{figure} 


To better clarify the need for the path function in the mapping,
notice that each \DLRpm relation is reified according to the
decomposition dictated by the projection signature graph it
dominates. Thus, to access, e.g., an attribute $U_j$ of a \DLRpm
relation ${R_i}$ it is necessary to follow the path through the
projections that use that attribute. Such a path, from the node
denoting the whole signature of the relation, ${\tau(R_i)}$, to the
node denoting the attribute $U_j$ is returned by the
$\pth{\tau(R_i)}{U_j}$ function.
For instance, considering the example from Figure~\ref{fig:mapping}, to
access the attribute $W_1$ of the relation $R_2$ in the expression
$(\selects{W_1}{C}{R_2})$, the mapping of the path
$\pth{\tau(R_2)}{\{W_1\}}^\dag$ is equal to the role chain
$Q_{\{W_1,W_2,W_3\}}\chain Q_{\{W_1,W_2\}}\chain Q_{\{W_1\}}$. This means that
$ (\selects{W_1}{C}{R_2})^\dag ~=~ A_{R_2} \sqcap \forall
Q_{\{W_1,W_2,W_3\}}\per\forall Q_{\{W_1,W_2\}}\per\forall Q_{\{W_1\}}\per C.$
Similar considerations can be done when mapping cardinalities over
relation projections.

We now present in details the mapping of a \DLRpm KB into a KB in
\ALCQI. Let $\KB = (\Tmc, \A)$ be a \DLRpm KB with signature
$(\mathcal{C},\mathcal{R},\Ob,\mathcal{U},\tau)$. The mapping
$\gamma(\KB)$ is assumed to be unsatisfiable (i.e., it contains the
axiom $\top\sqsubseteq \bot$) if the ABox contains the relation
assertion $R\!N(t)$ with $\tau(R\!N)\neq \tau(t)$, for some relation
$R\!N\in\Rmc$ and some tuple $t$. Otherwise,
$\gamma(\KB) = (\gamma(\Tmc),\gamma(\A))$ defines an \ALCQI KB as
follows:
\begin{align*}
  \gamma(\Tmc)  = {} &  \gamma_{\textit{dsj}} ~\cup %
  \bigcup_{R\!N\in\Rmc}\gamma_{\textit{rel}}(R\!N) ~\cup %
  \bigcup_{R\!N\in\Rmc}\gamma_{\textit{lobj}}({R\!N})
  ~\cup \\
  & \bigcup_{C_1\sqsubseteq C_2\in\KB}{C_1^\dag\sqsubseteq C_2^\dag}
  ~\cup%
 \bigcup_{R_1\sqsubseteq R_2\in\KB}{R_1^\dag\sqsubseteq
    R_2^\dag}\\
\gamma_\textit{dsj} = {} 
&~\bigl\{A_{R\!N_1}^{\tau_i}\sqsubseteq\neg A_{R\!N_2}^{\tau_j} \mid R\!N_1, R\!N_2\in\Rmc, \\
&\qquad \tau_i\in\mathscr{T}_{\tau(R\!N_1)}, \tau_j\in \mathscr{T}_{\tau(R\!N_2)}, |\tau_i|\geq 2, |\tau_j|\geq 2, \tau_i\neq \tau_j
  \bigr\}\\
%
\gamma_{\textit{rel}}(R\!N) =&~\bigcup_{\mathclap{\tau_i\in\mathscr{T}_{\tau(R\!N)}}}\quad
     \bigcup_{\mathrlap{\chd{\tau_i}{\tau_j}}}\quad\bigl\{
     A^{\tau_i}_{R\!N} \sqsubseteq \exists Q_{\tau_j}\per A^{\tau_j}_{R\!N},~\exists^{\geq 2}
     Q_{\tau_j}\per\top\sqsubseteq \bot
     \bigr\} \\
%
\gamma_{\textit{lobj}}({R\!N}) =&~\{
                                  \parbox[t]{0.5\textwidth}{$
A_{R\!N}\sqsubseteq\exists Q_{R\!N}\per A_{R\!N}^l,~
\exists^{\geq 2}Q_{R\!N}\per \top\sqsubseteq \bot,\\
A_{R\!N}^l \sqsubseteq \exists Q_{R\!N}^-\per A_{R\!N},~
\exists^{\geq 2} Q_{R\!N}^-\per \top\sqsubseteq \bot\}.$}
\end{align*}
Intuitively, $\gamma_\textit{dsj}$ ensures that relations with
different signatures are disjoint, thus, e.g., enforcing the union
compatibility. The axioms in $\gamma_{\textit{rel}}$ introduce
classical reification axioms for each relation and its relevant
projections. The axioms in $\gamma_{\textit{lobj}}$ make sure that
each local objectification differs from the global one while each role
$Q_{R\!N}$ defines a bijection.

To translate the ABox, we first map each individual $o\in\Ob$ in the \DLRpm
ABox \A to an \ALCQI individual $o$. 
Each relation instance occurring in \A 
is mapped via an injective function $\xi$ to a distinct
individual. That is, $\xi: T_\Ob(\mathcal{U})\to \mathcal{O}_\ALCQI$,
with $\mathcal{O}_\ALCQI = \Ob\cup \Ob^t$ being the set of individual names
in $\gamma(\KB)$, $\Ob \cap \Ob^t = \emptyset$ and
\vspace{-1ex}
$$\xi(t)  ~=~ 
                  \begin{cases}
                    o\in\Ob, &\text{if } t = \langle U\!:\!o\rangle \\
                    o\in\Ob^t, &\text{otherwise.}\\
                  \end{cases}
$$
Following~\cite{HorrocksSTT00}, the mapping $\gamma(\A)$ in
Fig.~\ref{fig:gammaA} introduces a new concept name $Q_o$ for each
individual $o \in\Ob$ and a new concept name $Q_t$ for each relation
instance $t$ occurring in \A, with each $Q_t$ restricted as follows:

\noindent
\begin{multline}
\label{eq:uni-tuple}
Q_t \sqsubseteq \exists^{\leq 1}\!\big(\pth{\tau(t)}{\{U_1\}}^\dag
\big)^-\per \\
\exists\!\big(\pth{\tau(t)}{\{U_2\}}^\dag
\big)\per\!Q_{o_2}\sqcap\!\ldots\!\sqcap
\exists\!\big(\pth{\tau(t)}{\{U_n\}}^\dag \big)\per\!Q_{o_n}
\end{multline}
%
Intuitively,~\eqref{eq:rei1} and~\eqref{eq:rei2} reify each relation
instance occurring in \A using the projection signature of the
relation instance itself. The
formulas~\eqref{eq:unique1}-\eqref{eq:unique2} together with the
axioms for concepts $Q_t$  guarantee that there is exactly one \ALCQI
individual reifying a given relation instance.
Clearly, the size of $\gamma(\KB)$ is polynomial in the size of $\KB$
under the same coding of the numerical parameters. 

\begin{figure}[t]
\begin{align}
\gamma(\A) =
& \label{eq:ob1}
\{ C\!N^\dag(o) \mid C\!N(o)\in\A \} ~\cup\\
& \label{eq:ob2}
\{ o_1\neq o_2 \mid o_1\neq o_2\in\A \} ~\cup~\{ o_1=o_2 \mid o_1= o_2\in\A \}~\cup\\
& \label{eq:rei1}
\{A^{\tau_i}_{R\!N}(\xi(t[\tau_i])) \mid R\!N(t)\in\A \textit{ and } \tau_i
  \in\mathscr{T}_{\tau(R\!N)}\} ~\cup\\
& \label{eq:rei2}
\{ Q_{\tau_j}\big(\xi(t[\tau_i]),\xi(t[\tau_j])\big) \mid R\!N(t)\in\A,
\tau_i\in\mathscr{T}_{\tau(R\!N)} \textit{ and } 
  \chd{\tau_i}{\tau_j} \}~\cup\\
& \label{eq:unique1}
\{Q_o(o)\mid o\in\Ob\}~\cup\\
& \label{eq:unique2}
\{Q_t(o_1)\mid t = \langle
  U_1\!:\!o_1,\ldots, U_n\!:\!o_n\rangle \text{ occurs in } \A\}.
\end{align}
\caption{The mapping $\gamma(\A)$}
\label{fig:gammaA}
\end{figure}

We are now able to state our main results.
\begin{theorem}
  \label{th:sat} A \DLRpm knowledge base $\KB$ is satisfiable iff the
  \ALCQI knowledge base $\gamma(\KB)$ is satisfiable.
\end{theorem}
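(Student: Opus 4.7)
The plan is to prove the two implications separately by exhibiting an explicit correspondence between \DLRpm and \ALCQI interpretations that preserves satisfaction of all axioms.

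For the forward direction, starting from a \DLRpm model $\Imc = (\dom, \cdot^\Imc, \imath, L)$ of $\KB$, I would construct an \ALCQI interpretation $\Jmc$ over the same domain by setting $A^{\tau_i}_{R\!N}{}^\Jmc = \{\imath(t[\tau_i]) \mid t \in R\!N^\Imc\}$ for each $\tau_i \in \mathscr{T}_{\tau(R\!N)}$, setting $A^l_{R\!N}{}^\Jmc = \{\ell_{R\!N}(t) \mid t \in R\!N^\Imc\}$, interpreting each projection role as $Q^\Jmc_{\tau_j} = \{(\imath(t[\tau_i]), \imath(t[\tau_j])) \mid t \in R\!N^\Imc,\ \chd{\tau_i}{\tau_j}\}$, and interpreting $Q_{R\!N}^\Jmc = \{(\imath(t), \ell_{R\!N}(t)) \mid t \in R\!N^\Imc\}$. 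The disjointness axioms $\gamma_\textit{dsj}$ follow from the global injectivity of $\imath$ (and from the multitree condition, which guarantees that a reification element uniquely belongs to one projection class when $|\tau_i|\geq 2$); the reification axioms in $\gamma_{\textit{rel}}$ follow because each projection $t[\tau_j]$ is uniquely determined by $t[\tau_i]$ whenever $\tau_j\subset\tau_i$; the axioms in $\gamma_{\textit{lobj}}$ follow from the injectivity of $\ell_{R\!N}$ and the distinctness of the local objectification range from the global one. Verifying that each $C^\dag\sqsubseteq D^\dag$ holds reduces to an inductive argument on the structure of \DLRpm expressions, using the fact that the path-chain $\pth{\tau(R)}{\tau'}^\dag$ navigates through exactly the reifications of the intermediate projections.

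For the backward direction, from an \ALCQI model $\Jmc$ of $\gamma(\KB)$, I would reconstruct a \DLRpm model $\Imc$ by de-reifying: for every $d \in A^{\tau(R\!N)}_{R\!N}{}^\Jmc$, define a tuple $t_d$ by setting $t_d[U]$ to be the unique element reached from $d$ by traversing $(\pth{\tau(R\!N)}{\{U\}})^\dag$. Well-definedness rests on two facts given by the axioms in $\gamma_{\textit{rel}}$: existence of such an element (each $A^{\tau_i}_{R\!N}$-instance has a $Q_{\tau_j}$-successor into $A^{\tau_j}_{R\!N}$ for every child $\tau_j$) and uniqueness (the axiom $\exists^{\geq 2} Q_{\tau_j}\per\top \sqsubseteq \bot$ enforces functionality of every $Q_{\tau_j}$). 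The relation $R\!N^\Imc$ is set to $\{t_d \mid d\in A^{\tau(R\!N)}_{R\!N}{}^\Jmc\}$, $\imath(t_d) = d$, and $\ell_{R\!N}(t_d)$ is the $Q_{R\!N}$-successor of $d$; on all remaining tuples $\imath$ is extended to a fresh injective function (and similarly for $\ell_{R\!N}$) by enlarging the domain if needed.

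The main obstacle is verifying that the two \DLRpm syntactic restrictions of Definition~\ref{def:dlrpm} are exactly what makes the backward construction sound. The multitree condition ensures that $\pth{\tau(R\!N)}{\tau'}$ is a single path, so the de-reified tuple $t_d$ is unambiguously defined, and it also ensures that the disjointness axioms $\gamma_\textit{dsj}$ do not overconstrain $\Jmc$: without multitree, a single \ALCQI element could legitimately correspond to projections of several distinct \DLRp relations through overlapping attribute sets, and the correspondence between $A^{\tau}_{R\!N}$ instances and tuples of $R\!N^\Imc$ would break down. The cardinality condition (length-1 paths for $q>1$) is essential for the translation of $\exists^{\geq q}[U_i]R$ and $\pi^{\lesseqgtr q}[U_1,\ldots,U_k]R$: if the path $\pth{\tau(R)}{\{U_1,\ldots,U_k\}}$ had length $\ell > 1$, then counting $\ell$-step $R$-witnesses in \ALCQI through a chain of functional roles would coincide with counting \DLRpm tuples only when the count is $1$; for larger $q$ the two counts can diverge because intermediate reifications along the chain are individually counted rather than counted modulo projection equivalence. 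Once both conditions are used to align the semantics, checking each mapped axiom $C^\dag \sqsubseteq D^\dag$ (and the ABox formulas generated by $\gamma(\A)$, using the constraints on the $Q_t$ concepts to identify distinct reifications of the same tuple) is a direct structural induction.
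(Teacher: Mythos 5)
Your construction is essentially the one in the paper: the same reified interpretation $\Jmc$ in the forward direction (with $A^{\tau_i}_{R\!N}$, $Q_{\tau_j}$, $Q_{R\!N}$ and $A^{l}_{R\!N}$ interpreted exactly as you describe), the same de-reification in the backward direction, and the same key lemma proved by structural induction, namely that $d\in C^\Imc$ iff $d\in(C^\dag)^\Jmc$ and $t\in R^\Imc$ iff $\imath(t)\in(R^\dag)^\Jmc$ for all \DLRpm concepts and relations.

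There is, however, a genuine gap in your backward direction. You set $\imath(t_d)=d$ for every $d\in A_{R\!N}^\Jmc$, but nothing in $\gamma(\KB)$ forbids two distinct elements $d_1\neq d_2$ of $A_{R\!N}^\Jmc$ from generating the same tuple, i.e.\ $t_{d_1}=t_{d_2}$: the axioms $\exists^{\geq 2}Q_{\tau_j}\per\top\sqsubseteq\bot$ make the roles $Q_{\tau_j}$ functional but not inverse functional. In that case $\imath$ is not well defined, let alone injective, and the right-to-left induction step for $\exists^{\geq q}[U_i]R$ fails: $q$ distinct \ALCQI witnesses in $(R^\dag)^\Jmc$ need no longer yield $q$ distinct tuples in $R^\Imc$. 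The paper closes exactly this hole by assuming, w.l.o.g., that $\Jmc$ is a \emph{forest model}, so that every tuple with an anonymous component has a unique generator, and by invoking axiom~\eqref{eq:uni-tuple} on the concepts $Q_t$ to force a unique generator for each tuple asserted in the ABox; you mention the $Q_t$ constraints only in passing and say nothing about anonymous tuples, so without a forest-model (or explicit unravelling/quotienting) step your $\imath$ need not exist. Two minor further points: the disjointness axioms $\gamma_\textit{dsj}$ in the forward direction need only the injectivity of $\imath$ together with the fact that tuples over different signatures are distinct---the multitree condition plays no role there; and the real reason condition~2 of Definition~\ref{def:dlrpm} is needed is that $\exists^{\geq q}$ over a role chain of length greater than one is simply not an \ALCQI concept, rather than a semantic divergence of counts.
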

\begin{proof}
  We assume that the \KB is consistently rewritten by substituting
  each attribute with its canonical representative, thus, we do not
  have to deal with the renaming of attributes. Furthermore, we extend
  the function $\imath$ to
  singleton tuples with the meaning that $\imath(\langle
  U_i:d_i\rangle)=d_i$.\\
  ($\Rightarrow$) Let
  $\Imc = (\Delta^\Imc, \cdot^\Imc, \rho, \imath,
  \ell_{R\!N_1},\ldots)$ be a model for a \DLRpm knowledge base
  $\KB$. To construct a model $\Jmc=(\Delta^\Jmc,\cdot^\Jmc)$ for the
  \ALCQI knowledge base $\gamma(\KB)$ we set
  $\Delta^\Jmc = \Delta^\Imc$, $o^\Jmc = o^\Imc$ for all $o\in\Ob$ and
  \begin{align}
    \label{int:xi}    
    [\xi(\langle U_1\!:\!o_1,\ldots,U_n\!:\!o_n\rangle)]^\Jmc =
    \imath(\langle
    U_1\!:\!o^\Imc_1,\ldots,U_n\!:\!o^\Imc_n\rangle).
  \end{align}
  Furthermore, we set:
  $(C\!N^\dag)^\Jmc=(C\!N)^\Imc$, for every atomic concept
  $C\!N\in\Cmc$, while for every ${R\!N}\in\Rmc$ and
  ${\tau_i\in\mathscr{T}_{\tau(R\!N)}}$ we set
  \begin{multline}\label{eq:R}
    (A_{R\!N}^{\tau_i})^\Jmc = \{\imath(\langle U_1:d_1,\ldots,U_k:d_k\rangle) \mid 
    \{U_1,\ldots,U_k\}=\tau_i \text{ and } \\ \exists t\in R\!N^\Imc\per t[U_1]=d_1,\ldots,t[U_k]=d_k\}.
  \end{multline}
  For each role name $Q_{\tau_i}$, $\tau_i\in\mathscr{T}$, we set
  \begin{multline}\label{eq:Q}
    (Q_{\tau_i})^\Jmc = \{(d_1,d_2)\in \Delta^\Jmc\times
    \Delta^\Jmc\mid \exists t\in {R\!N}^\Imc \text{ s.t. } d_1=\imath(t[{\tau_j}]), d_2=\imath(t[{\tau_i}])\\
    \text{ and } \chd{\tau_j}{\tau_i}, \text{ for some } {R\!N}\in\R\}.
  \end{multline}
  For every ${R\!N}\in\Rmc$ we set
  \begin{multline}\label{eq:loc}
  Q_{R\!N}^\Jmc = \{(d_1,d_2)\in \Delta^\Jmc\times \Delta^\Jmc\mid
    \exists t\in\Int{R\!N} \text{ s.t. } d_1=\imath(t) \text{ and } d_2=\ell_{R\!N}(t)\},
  \end{multline}
and
  \begin{align}\label{eq:Aloc}
  (A^l_{R\!N})^\Jmc = \{\ell_{R\!N}(t)\mid t\in \Int{R\!N}\}.
  \end{align}
We first show that $\Jmc$ is indeed a model of $\gamma(\Tmc)$.
\begin{enumerate}
\item $\Jmc\models \gamma_\textit{dsj}$. This is a direct consequence
  of the fact that $\imath$ is an injective function and that tuples
  with different signatures are different tuples.
\item $\Jmc\models \gamma_{\textit{rel}}(R\!N)$, for every
  ${R\!N\in\Rmc}$. We show that, for each $\tau_i,\tau_j$
  such that $\chd{\tau_i}{\tau_j}$ and $\tau_i\in\mathscr{T}_{\tau(R\!N)}$, it holds that
  $\Jmc\models A^{\tau_i}_{R\!N} \sqsubseteq \exists Q_{\tau_j}\per
  A^{\tau_j}_{R\!N}$
  and
  $\Jmc\models~\exists^{\geq 2} Q_{\tau_j}\per\top\sqsubseteq \bot$:
  \begin{itemize}
  \item
    $\Jmc\models A^{\tau_i}_{R\!N} \sqsubseteq \exists Q_{\tau_j}\per
    A^{\tau_j}_{R\!N}$.
    Let $d\in(A^{\tau_i}_{R\!N})^\Jmc$, by~(\ref{eq:R}),
    $\exists t\in {R\!N}^\Imc$ s.t. $d=\imath(t[{\tau_i}])$. Since
    $\chd{\tau_i}{\tau_j}$, then $\exists d'=\imath(t[{\tau_j}])$ and,
    by~(\ref{eq:Q}), $(d,d')\in Q_{\tau_j}^\Jmc$, while
    by~(\ref{eq:R}), $d'\in (A^{\tau_j}_{R\!N})^\Jmc$. Thus,
    $d\in (\exists Q_{\tau_j}\per A^{\tau_j}_{R\!N})^\Jmc$.
  \item
    $\Jmc\models~\exists^{\geq 2} Q_{\tau_j}\per\top\sqsubseteq \bot$.
    The fact that each $Q_{\tau_j}$ is interpreted as a funcional role is
    a direct consequence of the construction~(\ref{eq:Q}) and the fact
    that $\imath$ is an injective function.
  \end{itemize}
\item $\Jmc\models \gamma_{\textit{lobj}}(R\!N)$, for every
  ${R\!N\in\Rmc}$. Similar as above, considering the fact that each $\ell_{R\!N}$ is
  an injective function and equations~(\ref{eq:loc})-(\ref{eq:Aloc}).
\item $\Jmc\models {C_1^\dag\sqsubseteq C_2^\dag}$ and
  $\Jmc\models {R_1^\dag\sqsubseteq R_2^\dag}$. Since
  $\Imc\models {C_1\sqsubseteq C_2}$ and
  $\Imc\models R_1\sqsubseteq R_2$, it is enough to show the
  following:
  \begin{itemize}
  \item $d\in \Int{C} \text{ iff } d\in (C^\dag)^\Jmc$, for all \DLRpm\ concepts;
  \item $t\in \Int{R} \text{ iff } \imath(t)\in (R^\dag)^\Jmc$, for all \DLRpm\ relations.
  \end{itemize}
  Before we proceed with the proof, it is easy to show by structural
  induction that the following property holds:
  \begin{align}\label{prop:RN1}
    \text{If } \imath(t)\in R^{\dag\Jmc} \text{ then } \exists
    \imath(t')\in R\!N^{\dag\Jmc} 
    \text{ s.t. } t=t'[\tau(R)], \text{ for some } R\!N\in\R.
  \end{align}
  We now proceed with the proof by structural induction. The base
  cases, for atomic concepts and roles, are immediate form the
  definition of both ${C\!N}^\Jmc$ and ${R\!N}^\Jmc$.
  The cases where complex concepts and relations are constructed using
  either boolean operators, relation difference or global reification
  are easy to show. We thus show only the following cases.

  Let $d\in(\lreif R\!N)^\Imc$. Then, $d=\ell_{R\!N}(t)$ with
  $t\in R\!N^\Imc$. By induction, $\imath(t)\in A_{R\!N}^\Jmc$ and, by
  $\gamma_{\textit{lobj}}({R\!N})$, there is a $d'\in\Delta^\Jmc$
  s.t. $(\imath(t),d')\in Q_{R\!N}^\Jmc$ and
  $d'\in (A_{R\!N}^l)^\Jmc$. By~(\ref{eq:loc}), $d'=\ell_{R\!N}(t)$
  and, since $\ell_{R\!N}$ is injective, $d'=d$. Thus,  $d\in(\lreif
  R\!N)^{\dag\Jmc}$.

  Let $d\in(\exists^{\geq q}[U_i] R)^\Imc$. Then, there are different
  $t_1,\ldots,t_q\in R^\Imc$ s.t. $t_l[U_i]=d$, for all
  $l=1,\ldots,q$. By induction, $\imath(t_l)\in R^{\dag \Jmc}$ while,
  by~(\ref{prop:RN1}), $\imath(t'_l)\in R\!N^{\dag\Jmc}$, for
  some atomic relation ${R\!N}\in\R$ and a tuple $t'_l$
  s.t. $t_l=t'_l[\tau(R)]$. By~$\gamma_{\textit{rel}}({R\!N})$
  and~(\ref{eq:Q}), $(\imath(t'_l),\imath(t_l))\in
  (\pth{\tau({R\!N})}{\tau(R)}^\dag)^\Jmc$ and  $(\imath(t_l),d)\in
  (\pth{\tau({R})}{\{U_i\}}^\dag)^\Jmc$. Since $\imath$ is injective,
  $\imath(t_l)\neq \imath(t_j)$ when $l\neq j$,
  thus, $d\in(\exists^{\geq q}[U_i] R)^{\dag\Jmc}$.

  Let $t\in(\selects{U_i}{C}{R})^\Imc$. Then, $t\in R^\Imc$ and
  $t[U_i]\in C^\Imc$ and, by induction, $\imath(t)\in R^{\dag \Jmc}$
  and $t[U_i]\in C^{\dag \Jmc}$. As before, by
  $\gamma_{\textit{rel}}(R\!N)$ and by~(\ref{eq:Q}) and (\ref{prop:RN1}), we have 
  $(\imath(t),t[U_i])\in(\pth{\tau(R)}{\{U_i\}}^\dag)^\Jmc$.
   Since $\pth{\tau(R)}{U_i}^\dag$ is functional, then we have that
  $\imath(t)\in (\selects{U_i}{C}{R})^{\dag \Jmc}$.

  Let $t\in(\exists[U_1,\ldots,U_k] R)^{\Imc}$. Then, there is a tuple
  $t'\in R^\Imc$ s.t. $t'[U_1,\ldots,U_k]=t$ and, by induction,
  $\imath(t')\in R^{\dag\Jmc}$. As before, by
  $\gamma_{\textit{rel}}(R\!N)$ and by~(\ref{eq:Q}) and
  (\ref{prop:RN1}), we can show that
  $(\imath(t'),\imath(t))\in
  \pth{\tau(R)}{\{U_1,\ldots,U_k\}}^{\dag\Jmc}$ and thus it follows that 
  $\imath(t)\in(\exists[U_1,\ldots,U_k] R)^{\dag\Jmc}$.

  All the other cases can be proved in a similar way. We now show the
  vice versa.

  \smallskip

  Let $d\in(\lreif R\!N)^{\dag\Jmc}$. Then, $d\in (A_{R\!N}^l)^\Jmc$
  and $d=l_{R\!N}(t)$, for some $t\in
  {R\!N}^\Imc$, i.e., $d\in(\lreif R\!N)^\Imc$.

  Let $d\in(\exists^{\geq q}[{U_i}] R)^{\dag \Jmc}$. Then, there are
  different  $d_1,\ldots,d_q\in\Delta^\Jmc$ such that
  $(d_l,d)\in (\pth{\tau(R)}{\{U_i\}}^\dag)^{\Jmc}$ and
  $d_l\in R^{\dag \Jmc}$, for $l=1,\ldots,q$. By induction, each
  $d_l=\imath(t_l)$ and $t_l\in R^\Imc$. Since $\imath$ is injective,
  then $t_l\neq t_j$ for all $l,j=1,\ldots,q$, $l\neq j$. We need to
  show that $t_l[U_i] = d$, for all $l=1,\ldots,q$. By~(\ref{eq:Q})
  and the fact that $(d_l,d)\in (\pth{\tau(R)}{\{U_i\}}^\dag)^{\Jmc}$,
  then $d=\imath(t_l[U_i])=t_l[U_i]$.

  Let $\imath(t)\in(\selects{U_i}{C}{R})^{\dag\Jmc}$.  Then,
  $\imath(t)\in R^{\dag \Jmc}$ and, by induction, $t\in R^\Imc$. Let
  $t[U_i]=d$. We need to show that $d\in C^\Imc$. By
  $\gamma_{\textit{rel}}(R\!N)$ and by~(\ref{eq:Q}) and
  (\ref{prop:RN1}), it follows that
  $(\imath(t),d)\in (\pth{\tau(R)}{\{U_i\}}^\dag)^\Jmc$, then
  $d\in C ^{\dag\Jmc}$ and, by induction, $d\in C ^{\Imc}$.

  Let $\imath(t)\in(\exists[U_1,\ldots,U_k] R)^{\dag\Jmc}$. Then, there is
  $d\in\Delta^\Jmc$ s.t.
  $$(d,\imath(t)) \in (\pth{\tau(R)}{\{U_1,\ldots,U_k\}}^\dag)^\Jmc$$ 
  and $d\in R^{\dag\Jmc}$.  By induction, $d=\imath(t')$ and
  $t'\in R^\Imc$. By the definionition of the mapping of paths
  and~(\ref{eq:Q}), $\imath(t)= \imath(t'[U_1,\ldots,U_k])$, i.e.,
  $t=t'[U_1,\ldots,U_k]$. Thus,
  $t\in(\exists[U_1,\ldots,U_k] R)^{\Imc}$.

  \smallskip
  We now show that $\Jmc$ is a model of $\gamma(\A)$.
  
  Concerning axioms in~\eqref{eq:ob1} and~\eqref{eq:ob2} they are
  satified by construction. $\Jmc$ also satisfies axioms
  in~\eqref{eq:rei1} and in~\eqref{eq:rei2} due to~\eqref{eq:R}
  and~\eqref{eq:Q}, respectively, and the interpretation of $\xi$
  in~\eqref{int:xi}. Concerning axioms
  in~\eqref{eq:unique1}-\eqref{eq:unique2}, we set
  $Q_o^\Jmc = \{o^\Imc\}$, for each $o\in\Ob$, and
  $Q_t^\Jmc = \{o_1^\Imc\}$, for each tuple
  $t = \langle U_1\!:o_1,\ldots,U_n\!:\!o_n\rangle$ occurring in
  $\A$. We finally show that $\Jmc$ satisfies axiom~\eqref{eq:uni-tuple}
  by considering, w.l.o.g., the case of binary tuples,
  $t = \langle U_1\!:\!o_1, U_2\!:\!o_2\rangle$. Then,
  $\pth{\tau(t)}{\{U_1\}}^\dag = Q_{U_1}$ and
  $\pth{\tau(t)}{\{U_2\}}^\dag = Q_{U_2}$. Assume that
  $o_1^\Jmc\in Q_t^\Jmc$ and that there are objects
  $d_1, d_2, d_3, d_4\in \Delta^\Jmc$ such that
  $(d_1,o_1^\Jmc), (d_2,o_1^\Jmc)\in Q_{U_1}^\Jmc$,
  $(d_1,d_3), (d_2,d_4)\in Q_{U_2}^\Jmc$ and
  $d_3, d_4\in Q_{o_2}^\Jmc$. We need to show that $d_1 = d_2$. We
  first notice that, since concepts $Q_o$ are interpreted as
  singleton, $d_3 = d_4 = o_2^\Jmc$. Furthermore, by~\eqref{eq:Q},
  $d_1 = \imath(t_1)$ and $d_2 = \imath(t_2)$, with $t_1 = \langle
  U_1\!:o_1^\Jmc, U_2\!:\!d_3\rangle$ and $t_2 = \langle
  U_1\!:o_1^\Jmc, U_2\!:\!d_4\rangle$ and thus $t_1 = t_2$. Since
  $\imath$ is injective, then $d_1 = d_2$.
\end{enumerate}

\smallskip

\noindent
($\Leftarrow$) Let $\Jmc=(\Delta^\Jmc,\cdot^\Jmc)$ be a model for the
knowledge base $\gamma(\KB)$.  Without loss of generality, we can
assume that $\Jmc$ is a \textit{forest model}. We then construct a model
$\Imc = (\Delta^\Imc, \cdot^\Imc, \rho, \imath, \ell_{R\!N_1},\ldots)$
for a \DLRpm knowledge base $\KB$. We set:
$\Delta^\Imc = \Delta^\Jmc$, $o^\Imc = o^\Jmc$ for all $o\in\Ob$, $C\!N^\Imc = (C\!N^\dag)^\Jmc$, for every
atomic concept $C\!N\in\Cmc$, while, for every ${R\!N}\in\Rmc$, we set:
\begin{multline}\label{eq:RN}
  R\!N^\Imc = \{t=\langle U_1\!:\!d_1,\ldots,U_n\!:\!d_n\rangle \in T_{\Delta^\Imc}(\tau(R\!N))\mid
  \exists d\in A_{R\!N}^\Jmc \text{ s.t. }\\ (d,t[U_i])\in (\pth{\tau(R\!N)}{\{U_i\}}^\dag)^\Jmc \text{ for } i=1,\ldots,n\}.
\end{multline}
Notice that~\eqref{eq:RN} defines a bijection between objects in
$\ALCQI$ reifying tuples and tuples themselves. Indeed, since $\Jmc$
satisfies $\gamma_{\textit{rel}}(R\!N)$, for every
$d\in A_{R\!N}^\Jmc$ there is a unique tuple
$\langle U_1\!:\!d_1,\ldots,U_n\!:\!d_n\rangle \in R\!N^\Imc$---thus we say
that $d$ \emph{generates}
$\langle U_1\!:\!d_1,\ldots,U_n\!:\!d_n\rangle$ and, in symbols,
$d\to \langle U_1\!:\!d_1,\ldots,U_n\!:\!d_n\rangle$. Furthermore,
since \Jmc is forest shaped, to each tuple whose components are not in
the ABox corresponds a unique $d$ that generates it. On the other
hands, since $\Jmc$ satisfies axiom~\eqref{eq:uni-tuple}, then also
for tuples occurring in the ABox there is a unique $d$ that generates
them.  Thus, let $d\to \langle U_1\!:\!d_1,\ldots,U_n\!:\!d_n\rangle$,
by setting $\imath(\langle U_1\!:\!d_1,\ldots,U_n\!:\!d_n\rangle) = d$
and
\begin{multline}\label{eq:iota}
  \imath(\langle U_1\!:\!d_1,\ldots,U_n\!:\!d_n\rangle[\tau_i])=d_{\tau_i}, \text{ s.t. }\\
  (d,d_{\tau_i})\in(\pth{\{U_1,\ldots,U_n\}}{\tau_i}^\dag)^\Jmc,
\end{multline}
for all ${\tau_i\in\mathscr{T}}$ s.t.
$\tau_i\subset \{U_1,\ldots,U_n\}$, then, the function $\imath$ is as required.

By setting
\begin{multline}\label{eq:lobj}
  \ell_{R\!N}(\langle U_1\!:\!d_1,\ldots,U_n\!:\!d_n\rangle) =
  d, \text{ s. t. }\\
  (\imath(\langle U_1\!:\!d_1,\ldots,U_n\!:\!d_n\rangle),d)\in
  Q_{R\!N}^\Jmc,
\end{multline}
then, by $\gamma_{\textit{lobj}}(R\!N)$, both $Q_{R\!N}$ and its inverse are
interpreted as a functional roles by \Jmc, thus
the function $\ell_{R\!N}$ is as required.

It is easy to show by structural induction that the following property holds:
\begin{align}\label{prop:RN}
\text{If } t\in R^\Imc \text{ then } \exists t'\in R\!N^\Imc \text{
  s.t. } 
t=t'[\tau(R)], \text{ for some } R\!N\in\R.
\end{align}
We now show that $\Imc$ is indeed a model of $\KB$. We first show that
$\Imc\models \Tmc$, i.e., 
$\Imc\models {C_1\sqsubseteq C_2}$ and
$\Imc\models R_1\sqsubseteq R_2$. As before, since
$\Jmc\models {C_1^\dag\sqsubseteq C_2^\dag}$ and
$\Jmc\models R_1^\dag\sqsubseteq R_2^\dag$, it is enough to show the
following:
  \begin{itemize}
  \item $d\in \Int{C} \text{ iff } d\in (C^\dag)^\Jmc$, for all \DLRpm\ concepts;
  \item $t\in \Int{R} \text{ iff } \imath(t)\in (R^\dag)^\Jmc$, for all \DLRpm\ relations.
  \end{itemize}
  The proof is by structural induction. The base cases are trivially
  true. Similarly for the boolean operators, difference between
  relations and global
  reification. We thus show only the following cases.

  Let $d\in(\lreif R\!N)^\Imc$. Then, $d=\ell_{R\!N}(t)$ with
  $t\in R\!N^\Imc$. By induction, $\imath(t)\in A_{R\!N}^\Jmc$ and, by
  $\gamma_{\textit{lobj}}({R\!N})$, there is a $d'\in\Delta^\Jmc$
  s.t. $(\imath(t),d')\in Q_{R\!N}^\Jmc$ and
  $d'\in (A_{R\!N}^l)^\Jmc$.  By~(\ref{eq:lobj}), $d=d'$ and
  thus, $d\in (\lreif R\!N)^{\dag\Jmc}$.

  Let $d\in(\exists^{\geq q}[{U_i}] R)^{\Imc}$. Then, $U_i\in\tau(R)$
  and there are different $t_1,\ldots,t_q\in R^\Imc$
  with $t_l[U_i]=d$, for all $l=1,\ldots,q$. For each $t_l$,
  by~(\ref{prop:RN}), there must exist some element
  $t'_l\in R\!N^\Imc \text{ such that } t_l=t'_l[\tau(R)]$, for some
  $R\!N\in\R$, while, by induction, $\imath(t_l)\in R^{\dag \Jmc}$ and
  $\imath(t'_l)\in R\!N^{\dag\Jmc}$. Thus, $t'_l[U_i]=t_l[U_i]=d$ and,
  by~(\ref{eq:RN}), it then follows that
  $(\imath(t'_l),d)\in (\pth{\tau(R\!N)}{\{U_i\}}^\dag)^\Jmc$ while,
  by~(\ref{eq:iota}), we have
  $(\imath(t'_l),\imath(t_l))\in
  (\pth{\tau(R\!N)}{\tau(R)})^{\dag\Jmc}$.  Since \DLRpm allows only
  for knowledge bases with a projection signature graph being a
  multitree, then,
  $$\pth{\tau(R\!N)}{\{U_i\}}^\dag =
  \pth{\tau(R\!N)}{\tau(R)}^\dag\chain \pth{\tau(R)}{\{U_i\}}^\dag.$$
  Thus, $(\imath(t_l),d)\in (\pth{\tau(R)}{\{U_i\}}^\dag)^\Jmc$ and,
  since $\imath$ is injective, then, $\imath(t_l)\neq \imath(t_j)$
  when $l\neq j$. Thus,  $d\in(\exists^{\geq q}[{U_i}] R)^{\dag\Jmc}$.

%
  Let $t\in(\selects{U_i}{C}{R})^\Imc$. Then, $t\in R^\Imc$, $U_i\in\tau(R)$ and
  $t[U_i]=d\in C^\Imc$. By induction, $\imath(t)\in R^{\dag\Jmc}$ and
  $d\in C^{\dag\Jmc}$. As before, by~(\ref{eq:RN}),~(\ref{eq:iota}) and~(\ref{prop:RN}),
  we can show that
  $(\imath(t),d)\in (\pth{\tau(R)}{\{U_i\}}^\dag)^\Jmc$ and, since
  $\pth{\tau(R)}{\{U_i\}}^{\dag}$ is functional, then
  $\imath(t)\in(\selects{U_i}{C}{R})^{\dag\Jmc}$.

  Let $t\in(\exists[U_1,\ldots,U_k] R)^{\Imc}$. Then, there is a tuple
  $t'\in R^\Imc$ s.t. $t'[U_1,\ldots,U_k]=t$ and, by induction,
  $\imath(t')\in R^{\dag\Jmc}$. As before,
  by~(\ref{eq:iota}) and (\ref{prop:RN}), we can show that
  $(\imath(t'),\imath(t))\in
  \pth{\tau(R)}{\{U_1,\ldots,U_k\}}^{\dag\Jmc}$ and thus
  $\imath(t)\in(\exists[U_1,\ldots,U_k] R)^{\dag\Jmc}$.

  All the other cases can be proved in a similar way. We now show the
  converse direction.

  \smallskip

  Let $d\in(\lreif R\!N)^{\dag\Jmc}$. Then, $d\in (A_{R\!N}^l)^\Jmc$
  and, by $\gamma_{\textit{lobj}}({R\!N})$, there is a
  $d'\in\Delta^\Jmc$ s.t. $(d',d)\in Q_{R\!N}^\Jmc$ and
  $d'\in A_{R\!N}^\Jmc$. By induction, $d'=\imath(t')$ with
  $t'\in {R\!N}^\Imc$ and thus, $(\imath(t'),d)\in Q_{R\!N}^\Jmc$ and,
  by~(\ref{eq:lobj}), $\ell_{R\!N}(t') = d$, i.e., $d\in(\lreif R\!N)^\Imc$.

  Let $d\in(\exists^{\geq q}[{U_i}] R)^{\dag \Jmc}$.
  Then, ${U_i}\in\tau(R)$ and there are different $d_1,\ldots,d_q\in\Delta^\Jmc$ s.t.
  $(d_l,d)\in (\pth{\tau(R)}{\{U_i\}}^\dag)^{\Jmc}$ and
  $d_l\in R^{\dag \Jmc}$, for $l=1,\ldots,q$. By induction, each
  $d_l=\imath(t_l)$ and $t_l\in R^\Imc$. Since $\imath$ is injective,
  then $t_l\neq t_j$ for all $l,j=1,\ldots,q$, $l\neq j$. We need to
  show that $t_l[U_i] = d$, for all
  $l=1,\ldots,q$. By~(\ref{prop:RN}), there exists a
  $t'_l\in R\!N^\Imc \text{ such that } t_l=t'_l[\tau(R)], \text{ for some
  } R\!N\in\R$ and, by~(\ref{eq:iota}), it holds that
  $(\imath(t'_l),\imath(t_l))\in
  (\pth{\tau(R\!N)}{\tau(R)}^\dag)^\Jmc$.
  Since $(\imath(t_l) ,d) \in (\pth{\tau(R)}{\{U_i\}}^\dag)^\Jmc$ and
  $\textsc{path}_{\mathscr{T}}$ is functional in \DLRpm,
  then, $(\imath(t'_l) ,d) \in (\pth{\tau(R\!N)}{\{U_i\}}^\dag)^\Jmc$
  and, by~(\ref{eq:RN}), $t'_l[U_i]=t_l[U_i] =d$.

%
  Let $\imath(t)\in(\selects{U_i}{C}{R})^{\dag\Jmc}$. Then,
  $\imath(t)\in R^{\dag \Jmc}$ and, by induction, $t\in R^\Imc$. Let
  $t[U_i]=d$. We need to show that $d\in C^\Imc$. As before,
  by~(\ref{prop:RN}) and~(\ref{eq:iota}), we have that
  $(\imath(t),d)\in (\pth{\tau(R)}{\{U_i\}}^\dag)^\Jmc$. Then
  $d\in C ^{\dag\Jmc}$ and, by induction, $d\in C ^{\Imc}$.

  Let $\imath(t)\in(\exists[U_1,\ldots,U_k] R)^{\dag\Jmc}$. Then, there is
  $d\in\Delta^\Jmc$ s.t.
  \begin{equation}
    \label{eq:path-tuple}
     (d,\imath(t)) \in (\pth{\tau(R)}{\{U_1,\ldots,U_k\}}^\dag)^\Jmc
  \end{equation}
  and $d\in R^{\dag\Jmc}$.  By induction, $d=\imath(t')$ and
  $t'\in R^\Imc$. By~(\ref{prop:RN}), there is a tuple
  $t''\in R\!N^\Imc$ s.t. $t' = t''[\tau(R)]$ and, by~(\ref{eq:iota}),
  $(\imath(t''),\imath(t'))\in (\pth{\tau(R\!N)}{\tau(R)}^\dag)^\Jmc$
  and thus, by~\eqref{eq:path-tuple},
  $(\imath(t''),\imath(t))\in
  (\pth{\tau(R\!N)}{\{U_1,\ldots,U_k\}}^\dag)^\Jmc$ and thus
  $t = t''[\{U_1,\ldots,U_k\}]$. Since
  $\{U_1,\ldots,U_k\}\subseteq \tau(R) \subseteq \tau( R\!N)$, then,
  $t = t''[\{U_1,\ldots,U_k\}] = (t''[\tau(R)])[U_1,\ldots,U_k] = t'[U_1,\ldots,U_k]$, i.e.,
  $t\in(\exists[U_1,\ldots,U_k] R)^{\Imc}$.

To show that $\Imc\models\A$, notice that \Imc satisfies both concept
assertions and individual assertions by construction. We need to show
that \Imc satisfies also relation assertions. Let $R\!N(t)\in\A$, with
$t=\langle U_1\!:\!o_1,\ldots,U_n\!:\!o_n\rangle$,
then, since \Jmc satisfies $\gamma(\A)$, and in particular
axiom~\eqref{eq:rei1}, then there exists $d = \xi(t) \in
A_{R\!N}^\Jmc$. By~\eqref{eq:rei2}, $(d,o_i^\Jmc)\in
(\pth{\tau(R\!N)}{\{U_i\}}^\dag)^\Jmc$ and, by~\eqref{eq:RN},
$t^\Imc\in R\!N^\Imc$.
\hfill\qed

\end{proof}

As a direct consequence of the above theorem and the fact that \DLR is a sublanguage of \DLRpm, we have that

\begin{corollary}
  Reasoning in \DLRpm is an \ExpTime-complete problem.
\end{corollary}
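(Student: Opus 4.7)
The plan is to match an \ExpTime upper bound against an \ExpTime lower bound. For the lower bound I would invoke the observation made in Section~\ref{sec:dlrpm}: every \DLR knowledge base is syntactically a \DLRpm knowledge base, since its projection signature graph is a degenerate multitree of depth one that vacuously satisfies both conditions of Definition~\ref{def:dlrpm}. Since \DLR satisfiability is already \ExpTime-hard~\cite{Calvanese:et:al:TOCL-2008}, the same lower bound transfers to \DLRpm verbatim.

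For the upper bound I would combine three ingredients. First, the Lemma at the end of Section~\ref{sec:syntax} reduces concept satisfiability, relation satisfiability, and entailment in \DLRp (hence in \DLRpm) to KB satisfiability in polynomial time, so it suffices to bound KB satisfiability. Second, Theorem~\ref{th:sat} equates satisfiability of a \DLRpm KB $\KB$ with satisfiability of the \ALCQI KB $\gamma(\KB)$. Third, \ALCQI satisfiability is well known to be decidable in deterministic exponential time with numerical parameters encoded in binary. Chaining these three facts yields the desired bound, \emph{provided} the mapping $\gamma$ is polynomial.

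The only point that requires care, and hence the main obstacle, is verifying that $|\gamma(\KB)|$ is polynomial in $|\KB|$. The projection signature graph $(\supset,\mathscr{T})$ has at most linearly many nodes in $|\KB|$, and by the multitree condition each ancestor--descendant pair is joined by a \emph{unique} directed path, so every expression $\pth{\tau}{\tau'}^\dag$ appearing in Figure~\ref{fig:themapping} is well defined and of length bounded by the depth of the graph. The auxiliary axioms generated by $\gamma_\textit{dsj}$, $\gamma_\textit{rel}$, and $\gamma_\textit{lobj}$ contribute at most a quadratic number of inclusions, while the ABox translation in Figure~\ref{fig:gammaA} adds one individual per tuple occurrence and one assertion per dominated projection in $\mathscr{T}_{\tau(R\!N)}$. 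All cardinality parameters $q$ are transferred unchanged, preserving any binary encoding. Hence $|\gamma(\KB)|$ is polynomial in $|\KB|$, so the reduction composed with an \ExpTime \ALCQI decision procedure yields an \ExpTime algorithm for \DLRpm, completing the proof of \ExpTime-completeness.
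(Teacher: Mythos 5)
Your proof is correct and follows essentially the same route as the paper: the lower bound is inherited from \DLR as a sublanguage of \DLRpm, and the upper bound chains the reduction to KB satisfiability, Theorem~\ref{th:sat}, and the \ExpTime decision procedure for \ALCQI. The only difference is that you spell out the polynomiality of $\gamma$ (unique bounded-length paths in the multitree, quadratically many disjointness axioms, preservation of the numeric coding), which the paper merely asserts with ``Clearly, the size of $\gamma(\KB)$ is polynomial''---a worthwhile elaboration, but not a different argument.
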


\section{Implementation of a \DLRpm API}
\label{sec:api}

We have implemented the framework discussed in this paper. DLRtoOWL is a Java library fully implementing \DLRpm reasoning services. The library is based on the tool ANTLR4 to parse serialised input, and on OWLAPI4 for the OWL2 encoding. The system includes JFact, the Java version of the popular Fact++ reasoner. DLRtoOWL provides a Java \DLR API package to allow developers to create, manipulate, serialise, and reason with \DLRpm knowledge bases in their Java-based application, extending in a compatible way the standard OWL API with the \DLRpm \textsc{tell} and \textsc{ask} services.

During the development of this new library we strongly focused on performance. Since the OWL encoding is only possible if we have already built the \ALCQI projection signature multitree, in principle the program should perform two parsing rounds: one to create the multitree and the other one to generate the OWL mapping. We faced this issue using dynamic programming: during the first (and only) parsing round we store in a data structure each axiom that we want to translate in OWL and, after building the multitree, by the dynamic programming technique we build on-the-fly a Java class which generates the required axioms.

\section{Conclusions}
\label{sec:conc}

We have introduced the very expressive \DLRp{}\negmedspace description logic, which
extends \DLR 
with database oriented constraints. \DLRp is expressive enough to
cover directly and more thoroughly the EER, UML, and ORM conceptual
data models, among others.
Although reasoning in \DLRp is undecidable, we show that a simple syntactic constraint on KBs restores decidability. In fact, the resulting logic \DLRpm has the same complexity (\ExpTime-complete) as the basic \DLR language. In other words, handling database constraints does not increase the complexity of reasoning in the logic.
To enhance the use and adoption of \DLRpm{}\negmedspace, 
we have developed an API that fully implements reasoning for this language, and maps input knowledge bases into OWL. Using a standard OWL reasoner, we are able to provide a variety of \DLRpm reasoning services. 

We plan to investigate the problem of query answering under \DLRpm
ontologies 
and to check whether the complexity for this problem 
can be lifted from known results in \DLR to \DLRpm{}\negmedspace.


\bibliographystyle{splncs03}
\bibliography{biblio}

\end{document}